\lstdefinestyle{custompython}{
  belowcaptionskip=1\baselineskip,
  breaklines=true,
  frame=L,
  language=Python,
  showstringspaces=false,
  basicstyle=\footnotesize\ttfamily,
  keywordstyle=\bfseries\color{blue!60!black},
  identifierstyle=\color{black},
  stringstyle=\color{red!50!black},
}
\newtheorem{lemma}{Lemma}
\newtheorem{proposition}{Proposition}
\theoremstyle{definition}
\newtheorem{definition}{Definition}
\title{Formalized Conceptual Spaces with a Geometric Representation of Correlations}
\author{Lucas Bechberger\thanks{Corresponding author, ORCID: 0000-0002-1962-1777}$\;$ and Kai-Uwe K\"uhnberger\\Institute of Cognitive Science, Osnabr\"uck University\\ \emph{\{lucas.bechberger, kai-uwe.kuehnberger\}@uni-osnabrueck.de}}
\date{}
\begin{document}
\onehalfspacing
\maketitle

\begin{abstract}
The highly influential framework of conceptual spaces provides a geometric way of representing knowledge. Instances are represented by points in a similarity space and concepts are represented by convex regions in this space. After pointing out a problem with the convexity requirement, we propose a formalization of conceptual spaces based on fuzzy star-shaped sets. Our formalization uses a parametric definition of concepts and extends the original framework by adding means to represent correlations between different domains in a geometric way. Moreover, we define various operations for our formalization, both for creating new concepts from old ones and for measuring relations between concepts. We present an illustrative toy-example and sketch a research project on concept formation that is based on both our formalization and its implementation.
\end{abstract}

\section{Introduction}

One common criticism of symbolic AI approaches is that the symbols they operate on do not contain any meaning: For the system, they are just arbitrary tokens that can be manipulated in some way. This lack of inherent meaning in abstract symbols is called the “symbol grounding problem” \cite{Harnad1990}. One approach towards solving this problem is to devise a grounding mechanism that connects abstract symbols to the real world, i.e., to perception and action.\\

The framework of conceptual spaces \cite{Gardenfors2000,Gardenfors2014} attempts to bridge this gap between symbolic and subsymbolic AI by proposing an intermediate conceptual layer based on geometric representations.
A conceptual space is a similarity space spanned by a number of quality dimensions that are based on perception and/or subsymbolic processing. Convex regions in this space correspond to concepts. Abstract symbols can thus be grounded in reality by linking them to regions in a conceptual space whose dimensions are based on perception.

The framework of conceptual spaces has been highly influential in the last 15 years within cognitive science and cognitive linguistics \cite{Douven2011,Warglien2012,Fiorini2013,Zenker2015}. It has also sparked considerable research in various subfields of artificial intelligence, ranging from robotics and computer vision \cite{Chella2001,Chella2003,Chella2005} over the semantic web and ontology integration \cite{Dietze2008,Adams2009a} to plausible reasoning \cite{Schockaert2011,Derrac2015}.\\

One important question is however left unaddressed by these research efforts: How can an (artificial) agent learn about meaningful regions in a conceptual space purely from unlabeled perceptual data?
Our overall approach for solving this concept formation problem is to devise an incremental clustering algorithm that groups a stream of unlabeled observations (represented as points in a conceptual space) into meaningful regions. In this paper, we lay the foundation for this approach by developing a thorough formalization of the conceptual spaces framework.\footnote{This chapter is a revised and extended version of research presented in the following workshop and conference papers: \cite{Bechberger2017AIC, Bechberger2017KI, Bechberger2017SGAI, Bechberger2017NeSy}}

In Section \ref{CS}, we point out that G\"ardenfors' convexity requirement prevents a geometric representation of correlations. We resolve this problem by using star-shaped instead of convex sets. Our mathematical formalization presented in Section \ref{FSSSS} defines concepts in a parametric way that is easily implementable. We furthermore define various operations in Section \ref{Operations}, both for creating new concepts from old ones and for measuring relations between concepts. Moreover, in Section \ref{Implementation} we describe our implementation of the proposed formalization and illustrate it with a simple conceptual space for fruits. In Section \ref{RelatedWork}, we summarize other existing formalizations of the conceptual spaces framework and compare them to our proposal. Finally, in Section \ref{Outlook} we give an outlook on future work with respect to concept formation before concluding the paper in Section \ref{Conclusion}.
\section{Conceptual Spaces}
\label{CS}

This section presents the cognitive framework of conceptual spaces as described in \cite{Gardenfors2000} and introduces our formalization of dimensions, domains, and distances. Moreover, we argue that concepts should be represented by star-shaped sets instead of convex sets.

\subsection{Definition of Conceptual Spaces}
\label{CS:Definition}

A conceptual space is a similarity space spanned by a set $D$ of so-called ``quality dimensions''. Each of these dimensions $d \in D$ represents a cognitively meaningful way in which two stimuli can be judged to be similar or different. Examples for quality dimensions include temperature, weight, time, pitch, and hue. We denote the distance between two points $x$ and $y$ with respect to a dimension $d$ as $|x_d - y_d|$.

A domain $\delta \subseteq D$ is a set of dimensions that inherently belong together. Different perceptual modalities (like color, shape, or taste) are represented by different domains. The color domain for instance consists of the three dimensions hue, saturation, and brightness.
G\"{a}rdenfors argues based on psychological evidence \cite{Attneave1950,Shepard1964} that distance within a domain $\delta$ should be measured by the weighted Euclidean metric $d_E$: 
$$d_E^{\delta}\left(x,y, W_{\delta}\right) = \sqrt{\sum_{d \in \delta} w_{d} \cdot | x_{d} - y_{d} |^2}$$
The parameter $W_{\delta}$ contains positive weights $w_{d}$ for all dimensions $d \in \delta$ representing their relative importance. We assume that $\textstyle\sum_{d \in \delta} w_{d} = 1$.\\


The overall conceptual space $CS$ can be defined as the product space of all dimensions. Again, based on psychological evidence \cite{Attneave1950,Shepard1964}, G\"{a}rdenfors argues that distance within the overall conceptual space should be measured by the weighted Manhattan metric $d_M$ of the intra-domain distances. Let $\Delta$ be the set of all domains in $CS$. We define the overall distance within a conceptual space as follows:
$$
d_C^{\Delta}\left(x,y,W\right) := \sum_{\delta \in \Delta} w_{\delta} \cdot d_E^{\delta}\left(x,y,W_{\delta}\right)
= \sum_{\delta \in \Delta}w_{\delta} \cdot \sqrt{\sum_{d \in \delta} w_{d} \cdot |x_{d} - y_{d}|^2}
$$
The parameter $W = \left\langle W_{\Delta},\left\{W_{\delta}\right\}_{\delta \in \Delta}\right\rangle$ contains $W_{\Delta}$, the set of positive domain weights $w_{\delta}$. We require that $\textstyle\sum_{\delta \in \Delta} w_{\delta} = |\Delta|$. Moreover, $W$ contains for each domain $\delta \in \Delta$ a set $W_{\delta}$ of dimension weights as defined above. The weights in $W$ are not globally constant, but depend on the current context. One can easily show that $d_C^{\Delta}\left(x,y,W\right)$ with a given $W$ is a metric.\\

The similarity of two points in a conceptual space is inversely related to their distance. G\"{a}rdenfors expresses this as follows :
$$Sim\left(x,y\right) = e^{-c \cdot d\left(x,y\right)}\quad \text{with a constant}\; c >0 \; \text{and a given metric}\; d$$

Betweenness is a logical predicate $B\left(x,y,z\right)$ that is true if and only if $y$ is considered to be between $x$ and $z$. It can be defined based on a given metric $d$: 
$$B_d\left(x,y,z\right) :\iff d\left(x,y\right) + d\left(y,z\right) = d\left(x,z\right)$$

The betweenness relation based on the Euclidean metric $d_E$ results in the straight line segment connecting the points $x$ and $z$, whereas the betweenness relation based on the Manhattan metric $d_M$ results in an axis-parallel cuboid between the points $x$ and $z$.
We can define convexity and star-shapedness based on the notion of betweenness:

\begin{definition}
\label{def:Convexity}
(Convexity)\\
A set $C \subseteq CS$ is \emph{convex} under a metric $d \;:\iff$

\hspace{1cm}$\forall {x \in C, z \in C, y \in CS}: \left(B_d\left(x,y,z\right) \rightarrow y \in C\right)$
\end{definition}

\begin{definition}
\label{def:StarShapedSet}
(Star-shapedness)\\
A set $S \subseteq CS$ is \emph{star-shaped} under a metric $d$ with respect to a set $P \subseteq S \;:\iff$ 

\hspace{1cm}$\forall {p \in P, z \in S, y \in CS}: \left(B_d\left(p,y,z\right) \rightarrow y \in S\right)$
\end{definition}

Convexity under the Euclidean metric $d_E$ is equivalent to the common definition of convexity in Euclidean spaces. The only sets that are considered convex under the Manhattan metric $d_M$ are axis-parallel cuboids.\\

G\"{a}rdenfors distinguishes properties like ``red'', ``round'', and ``sweet'' from full-fleshed concepts like ``apple'' or ``dog'' by observing that properties can be defined on individual domains (e.g., color, shape, taste), whereas full-fleshed concepts involve multiple domains.

\begin{definition}
\label{def:CriterionP}
(Property) \cite{Gardenfors2000}\\
A \emph{natural property} is a convex region of a domain in a conceptual space.
\end{definition}

Full-fleshed concepts can be expressed as a combination of properties from different domains. These domains might have a different importance for the concept which is reflected by so-called ``salience weights''. Another important aspect of concepts are the correlations between the different domains \cite{Medin1988}, which are important for both learning \cite{Billman1996} and reasoning \cite[Chapter 8]{Murphy2002}.

\begin{definition}
\label{def:CriterionC}
(Concept) \cite{Gardenfors2000}\\
A \emph{natural concept} is represented as a set of convex regions in a number of domains together with an assignment of salience weights to the domains and information about how the regions in different domains are correlated.
\end{definition}

\subsection{An Argument Against Convexity}
\label{CS:ArgumentAgainstConvexity}

\cite{Gardenfors2000} does not propose any concrete way for representing correlations between domains. As the main idea of the conceptual spaces framework is to find a geometric representation of conceptual structures, we think that a geometric representation of these correlations is desirable.

Consider the left part of Figure \ref{fig:ConvexityProblem}. In this example, we consider two domains, age and height, in order to define the concepts of child and adult. We expect a strong correlation between age and height for children, but no such correlation for adults. This is represented by the two ellipses.\footnote{Please note that this is a very simplified artificial example to illustrate our main point.} As one can see, the values of age and height constrain each other: For instance, if the value on the age dimension is low and the point lies in the child region, then also the value on the height dimension must be low.

\begin{figure}[tp]
\centering
\includegraphics[width=\columnwidth]{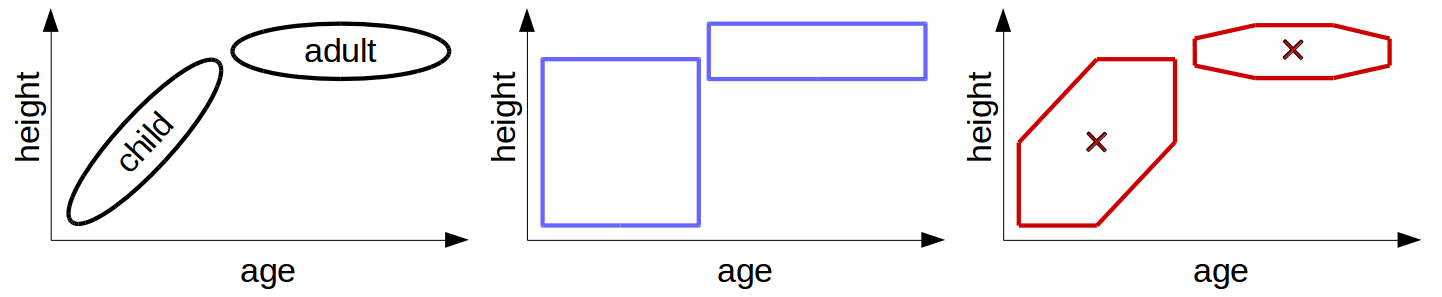}
\caption{Left: Intuitive way to define regions for the concepts of ``adult'' and ``child''. Middle: Representation by using convex sets. Right: Representation by using star-shaped sets with central points marked by crosses.}
\label{fig:ConvexityProblem}
\end{figure}

Domains are combined by using the Manhattan metric and convex sets under the Manhattan metric are axis-parallel cuboids. As the dimensions of age and height belong to different domains, a convex representation of the two concepts results in the rectangles shown in the middle of Figure \ref{fig:ConvexityProblem}. As one can see, all information about the correlation of age and height in the child concept is lost in this representation: The values of age and height do not constrain each other at all. According to this representation, a child of age 2 with a height of 1.60 m would be totally conceivable. This illustrates that we cannot geometrically represent correlations between domains if we assume that concepts are convex and that the Manhattan metric is used. We think that our example is not a pathological one and that similar problems will occur quite frequently when encoding concepts.\footnote{For instance, there is an obvious correlation between a banana's color and its taste. If you replace the ``age'' dimension with ``hue'' and the ``height'' dimension with ``sweetness'' in Figure \ref{fig:ConvexityProblem}, you will observe similar encoding problems for the ``banana'' concept as for the ``child'' concept.}\\

Star-shapedness is a weaker constraint than convexity. If we only require concepts to be star-shaped under the Manhattan metric, we can represent the correlation of age and height for children in a geometric way. This is shown in the right part of Figure \ref{fig:ConvexityProblem}: Both sketched sets are star-shaped under the Manhattan metric with respect to a central point.\footnote{Please note that although the sketched sets are still convex under the Euclidean metric, they are star-shaped but not convex under the Manhattan metric.} Although the star-shaped sets do not exactly correspond to our intuitive sketch in the left part of Figure \ref{fig:ConvexityProblem}, they definitely are an improvement over the convex representation.

By definition, star-shaped sets cannot contain any ``holes''. They furthermore have a well defined central region $P$ that can be interpreted as a prototype. Thus, the connection that \cite{Gardenfors2000} established between the prototype theory of concepts and the framework of conceptual spaces is preserved. Replacing convexity with star-shapedness is therefore only a minimal departure from the original framework.\\

The problem illustrated in Figure \ref{fig:ConvexityProblem} could also be resolved by replacing the Manhattan metric with a diferent distance function for combining domains. A natural choice would be to use the Euclidean metric everywhere. We think however that this would be a major modification of the original framework. For instance, if the Euclidean metric is used everywhere, the usage of domains to structure the conceptual space loses its main effect of influencing the overall distance metric. 
Moreover, there exists some psychological evidence \cite{Attneave1950,Shepard1964,Shepard1987,Johannesson2001} which indicates that human similarity ratings are reflected better by the Manhattan metric than by the Euclidean metric if different domains are involved (e.g., stimuli differing in size and brightness). As a psychologically plausible representation of similarity is one of the core principles of the conceptual spaces framework, these findings should be taken into account.
Furthermore, in high-dimensional feature spaces the Manhattan metric provides a better relative contrast between close and distant points than the Euclidean metric \cite{Aggarwal2001}. If we expect a large number of domains, this also supports the usage of the Manhattan metric from an implementational point of view.
One could of course also replace the Manhattan metric with something else (e.g., the Mahanalobis distance). However, as there is currently no strong evidence supporting the usage of any particular other metric, we think that it is best to use the metrics proposed in the original conceptual spaces framework.

Based on these arguments, we think that relaxing the convexity constraint is a better option than abolishing the use of the Manhattan metric.\\

Please note that the example given above is intended to highlight representational problems of the conceptual spaces framework if it is applied to artificial intelligence and if a geometric representation of correlations is desired. We do not make any claims that star-shapedness is a psychologically plausible extension of the original framework and we do not know about any psychological data that could support such a claim.

\section{A Parametric Definition of Concepts}
\label{FSSSS}

\subsection{Preliminaries}
\label{FSSSS:Preliminaries}

Our formalization is based on the following insight: 

\begin{lemma}
\label{lemma:UnionOfConvex}
Let $C_1, ..., C_m$ be convex sets in $CS$ under some metric $d$ and let $P := \textstyle\bigcap_{i=1}^{m} C_i$. If $P \neq \emptyset$, then $S := \textstyle\bigcup_{i=1}^{m} C_i$ is star-shaped under $d$ with respect to $P$.
\end{lemma}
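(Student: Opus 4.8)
The plan is to unpack the definition of star-shapedness (Definition \ref{def:StarShapedSet}) directly. Concretely, I must show that for every $p \in P$, every $z \in S$, and every $y \in CS$ with $B_d(p,y,z)$, we have $y \in S$. Fix such $p$, $z$, and $y$. Since $S = \bigcup_{i=1}^m C_i$, the point $z$ lies in at least one of the convex sets, say $z \in C_j$ for some index $j$. The key observation is that $p \in P = \bigcap_{i=1}^m C_i$, so in particular $p \in C_j$ as well. Thus both $p$ and $z$ belong to the same convex set $C_j$.

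Next I would invoke convexity of $C_j$ under $d$ (Definition \ref{def:Convexity}). Since $p \in C_j$, $z \in C_j$, and $B_d(p,y,z)$ holds, the defining implication of convexity gives $y \in C_j$. Finally, because $C_j \subseteq \bigcup_{i=1}^m C_i = S$, we conclude $y \in S$, which is exactly what star-shapedness requires. The hypothesis $P \neq \emptyset$ is needed only to guarantee that the set $P$ with respect to which $S$ is star-shaped is nonempty, so that the claim ``star-shaped with respect to $P$'' is meaningful rather than vacuous; the logical implication itself goes through for any $p \in P$.

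The argument is essentially a one-line combinatorial pivot: the intersection point $p$ lives in \emph{every} $C_i$ simultaneously, so no matter which piece $C_j$ the endpoint $z$ happens to fall into, the pair $(p,z)$ is automatically co-located in that convex piece, and convexity then traps the betweenness point $y$ inside. There is no real obstacle here beyond being careful about the quantifier structure: I want to make sure the index $j$ is chosen \emph{after} fixing $z$ (it depends on $z$), and that I correctly use $p \in C_j$ for that same $j$ rather than only knowing $p$ lies in some $C_i$. I would also remark that the conclusion holds for the maximal such central set, i.e., one could take $P$ to be exactly the intersection, since the proof places no constraint on $p$ other than membership in every $C_i$.
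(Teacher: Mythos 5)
Your proof is correct and is exactly the standard argument the paper has in mind when it declares the lemma ``obvious'' (citing Smith 1968): pick the piece $C_j$ containing $z$, note $p \in P \subseteq C_j$, and apply convexity of $C_j$. You simply fill in the details the paper omits, with the quantifier order handled properly.
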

\begin{proof}
Obvious (see also \cite{Smith1968}).
\end{proof}

We will use axis-parallel cuboids as building blocks for our star-shaped sets. They are defined in the following way:\footnote{All of the following definitions and propositions hold for any number of dimensions.}

\begin{definition}
\label{def:Cuboid}
(Axis-parallel cuboid)\\
We describe an \emph{axis-parallel cuboid}\footnote{We will drop the modifier "axis-parallel" from now on.} $C$ as a triple $\left\langle\Delta_C, p^-, p^+\right\rangle$. $C$ is defined on the domains $\Delta_C \subseteq \Delta$, i.e., on the dimensions $D_C := \textstyle\bigcup_{\delta \in \Delta_C} \delta$. We call $p^-, p^+ $ the support points of $C$ and require that:
$$
\forall {d \in D_C}: p^+_d,p^-_d \notin \left\{+\infty,-\infty\right\} \quad \land \quad
\forall {d \in D \setminus D_C}: p^-_d := -\infty \land p^+_d := +\infty
$$
Then, we define the cuboid $C$ in the following way:
$$C = \left\{x \in CS \;|\; \forall {d \in D}: p^-_d \leq x_d \leq p^+_d\right\}$$
\end{definition}

\begin{lemma}
\label{lemma:Cuboid}
A cuboid $C$ is convex under $d_C^{\Delta}$, given a fixed set of weights $W$.
\end{lemma}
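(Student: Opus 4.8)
The plan is to unfold the definition of convexity (Definition \ref{def:Convexity}) directly: I would take arbitrary $x, z \in C$ and $y \in CS$ with $B_{d_C^{\Delta}}(x,y,z)$, and show that $y$ satisfies the coordinate-wise constraints $p^-_d \le y_d \le p^+_d$ defining $C$. The whole argument rests on decomposing betweenness under the combined metric $d_C^{\Delta}$ into betweenness within each individual domain. To set this up, I would apply the triangle inequality of the weighted Euclidean metric $d_E^{\delta}$ separately in each domain $\delta$, multiply by the positive weight $w_{\delta}$, and sum, obtaining $d_C^{\Delta}(x,z,W) \le d_C^{\Delta}(x,y,W) + d_C^{\Delta}(y,z,W)$. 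The hypothesis $B_{d_C^{\Delta}}(x,y,z)$ forces equality here; since this is an equality between sums of non-negative terms with strictly positive coefficients $w_{\delta}$, it can hold only if equality already holds in each domain, i.e. $d_E^{\delta}(x,y,W_{\delta}) + d_E^{\delta}(y,z,W_{\delta}) = d_E^{\delta}(x,z,W_{\delta})$, which is exactly $B_{d_E^{\delta}}(x,y,z)$ for every $\delta \in \Delta$.

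Next I would characterize betweenness under the weighted Euclidean metric within a single domain. Because $d_E^{\delta}$ is the norm induced by the positive-definite inner product with weights $w_d$, the equality case of its triangle inequality is the classical one: $B_{d_E^{\delta}}(x,y,z)$ holds iff the restriction of $y$ to $\delta$ lies on the straight line segment joining the restrictions of $x$ and $z$, i.e. $y_d = (1-t_{\delta})\,x_d + t_{\delta}\,z_d$ for some $t_{\delta} \in [0,1]$ and all $d \in \delta$. Equivalently, rescaling each coordinate by $\sqrt{w_d}$ turns $d_E^{\delta}$ into the ordinary Euclidean metric, under which betweenness is precisely the line segment, and the rescaling preserves collinearity and convex combinations.

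Combining the two steps finishes the proof. For every domain $\delta \in \Delta_C$ and every dimension $d \in \delta \subseteq D_C$, the value $y_d$ is a convex combination of $x_d$ and $z_d$. Since $x, z \in C$ gives $p^-_d \le x_d \le p^+_d$ and $p^-_d \le z_d \le p^+_d$, and an interval is closed under convex combinations, I obtain $p^-_d \le y_d \le p^+_d$. For the remaining dimensions $d \in D \setminus D_C$ the constraint is vacuous, since $p^-_d = -\infty$ and $p^+_d = +\infty$, so it is automatically satisfied. Hence $y \in C$, which establishes convexity of $C$ under $d_C^{\Delta}$.

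I expect the main obstacle to be the two equality-case analyses rather than the final coordinate bookkeeping. In the first step, one must justify that strict positivity of the domain weights $w_{\delta}$ is exactly what lets equality in the summed inequality propagate to every individual domain; in the second step, one must invoke the strict convexity of the Euclidean (inner-product) norm to identify its triangle-inequality equality case with genuine collinearity, rather than with some weaker condition that would fail to give a convex combination. Once both equality cases are pinned down, the passage to the coordinate-wise interval constraints is routine.
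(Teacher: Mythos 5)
Your proof is correct and follows essentially the route the paper intends: the paper's own proof of this lemma is only a two-sentence sketch (``cuboids are convex under $d_M$ and $d_E$, hence under their combination''), and your argument fills in exactly the missing content --- reducing equality in the triangle inequality for $d_C^{\Delta}$ to per-domain equality via the strictly positive weights $w_{\delta}$, identifying the Euclidean equality case with a coordinate-wise convex combination, and then checking the interval constraints (vacuous on $D \setminus D_C$). This is the same betweenness decomposition the paper itself spells out explicitly later in the proof of its Proposition~1, so nothing further is needed.
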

\begin{proof}
It is easy to see that cuboids are convex with respect to $d_M$ and $d_E$. Based on this, one can show that they are also convex with respect to $d_C^{\Delta}$, which is a combination of $d_M$ and $d_E$.
\end{proof}

Our formalization will make use of fuzzy sets \cite{Zadeh1965}, which can be defined in our current context as follows:

\begin{definition}
(Fuzzy set)\\
A \emph{fuzzy set} $\widetilde{A}$ on $CS$ is defined by its membership function $\mu_{\widetilde{A}}: CS \rightarrow [0,1]$.
\end{definition}

For each $x \in CS$, we interpret $\mu_{\widetilde{A}}\left(x\right)$ as degree of membership of $x$ in $\widetilde{A}$. Note that fuzzy sets contain crisp sets as a special case where $\mu_{\widetilde{A}}: CS \rightarrow \left\{0,1\right\}$.

\begin{definition}
(Alpha-cut)\\
Given a fuzzy set $\widetilde{A}$ on $CS$, its \emph{$\alpha$-cut} ${\widetilde{A}}^{\alpha}$ for $\alpha \in [0,1]$ is defined as follows:
$${\widetilde{A}}^{\alpha} := \left\{x \in CS\; |\; \mu_{\widetilde{A}}\left(x\right) \geq \alpha\right\}$$
\end{definition}

\begin{definition}
\label{def:FuzzyStarShaped}
(Fuzzy star-shapedness)\\
A fuzzy set $\widetilde{A}$ is called \emph{star-shaped} under a metric $d$ with respect to a crisp set $P$ if and only if all of its $\alpha$-cuts ${\widetilde{A}}^{\alpha}$ are either empty or star-shaped under $d$ with respect to $P$.
\end{definition}

One can also generalize the ideas of subsethood, intersection, and union from crisp to fuzzy sets. We adopt the most widely used definitions:

\begin{definition}
\label{def:FuzzyOperations}
(Operations on fuzzy sets)\\
Let $\widetilde{A}, \widetilde{B}$ be two fuzzy sets defined on $CS$.
\begin{itemize}
	\item Subsethood: \hspace{0.20cm}$\widetilde{A} \subseteq \widetilde{B} :\iff \left(\forall {x \in CS}: \mu_{\widetilde{A}}\left(x\right) \leq \mu_{\widetilde{B}}\left(x\right)\right)$
	\item Intersection: \hspace{0.15cm}$\forall x \in CS: \mu_{\widetilde{A} \cap \widetilde{B}}\left(x\right) := \min\left(\mu_{\widetilde{A}}\left(x\right),\mu_{\widetilde{B}}\left(x\right)\right)$
	\item Union: \hspace{1.2cm}$\forall x \in CS: \mu_{\widetilde{A} \cup \widetilde{B}}\left(x\right) := \max\left(\mu_{\widetilde{A}}\left(x\right),\mu_{\widetilde{B}}\left(x\right)\right)$
\end{itemize}
\end{definition}

\subsection{Fuzzy Simple Star-Shaped Sets}
\label{FSSSS:FSSSS}

By combining Lemma \ref{lemma:UnionOfConvex} and Lemma \ref{lemma:Cuboid}, we see that any union of intersecting cuboids is star-shaped under $d_C^{\Delta}$. We use this insight to define simple star-shaped sets (illustrated in Figure \ref{fig:FSSSS} with $m=3$ cuboids), which will serve as cores for our concepts:

\begin{definition}
\label{def:SSSS}
(Simple star-shaped set)\\
We describe a \emph{simple star-shaped set} $S$ as a tuple $\left\langle\Delta_S,\left\{C_1,\dots,C_m\right\}\right\rangle$ where $\Delta_S \subseteq \Delta$ is a set of domains on which the cuboids $\left\{C_1,\dots,C_m\right\}$ (and thus also $S$) are defined. We further require that the central region $P :=\textstyle\bigcap_{i = 1}^m C_i \neq \emptyset$. Then the simple star-shaped set $S$ is defined as 
$$S := \bigcup_{i=1}^m C_i$$
\end{definition}

\begin{figure}[tp]
\centering
\includegraphics[width = \columnwidth]{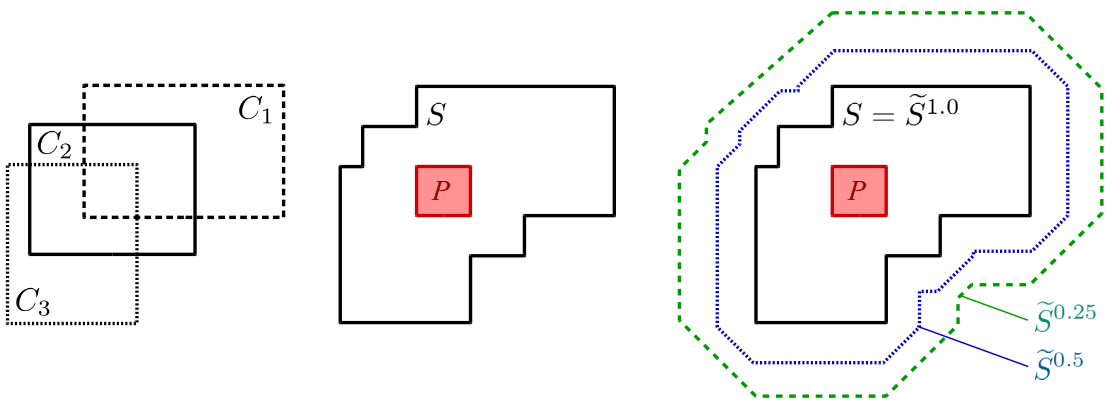} 
\caption{Left: Three cuboids $C_1, C_2, C_3$ with nonempty intersection. Middle: Resulting simple star-shaped set $S$ based on these cuboids. Right: Fuzzy simple star-shaped set $\tilde{S}$ based on $S$ with three $\alpha$-cuts for $\alpha \in \left\{1.0,0.5,0.25\right\}$.}
\label{fig:FSSSS}
\end{figure}

In practice, it is often not possible to define clear-cut boundaries for concepts and properties. It is, for example, very hard to define a generally accepted crisp boundary for the property ``red''. We therefore use a fuzzified version of simple star-shaped sets for representing concepts, which allows us to define imprecise concept boundaries. This usage of fuzzy sets for representing concepts has already a long tradition \cite{Osherson1982,Zadeh1982,Ruspini1991,Bvelohlavek2011,Douven2011}.
We use a simple star-shaped set $S$ as a concept's ``core'' and define the membership of any point $x \in CS$ to this concept as $\max_{y \in S}Sim\left(x,y\right)$:
\begin{definition}
\label{def:FSSSS}
(Fuzzy simple star-shaped set)\\
A \emph{fuzzy simple star-shaped set} $\widetilde{S}$ is described by a quadruple $\left\langle S,\mu_0,c,W\right\rangle$ where
$S = \left\langle\Delta_S,\left\{C_1,\dots,C_m\right\}\right\rangle$ is a non-empty simple star-shaped set. The parameter $\mu_0 \in (0,1]$ determines the highest possible membership to $\widetilde{S}$ and is usually set to 1. The sensitivity parameter $c > 0$ controls the rate of the exponential decay in the similarity function. Finally, $W = \left\langle W_{\Delta_S},\left\{W_{\delta}\right\}_{\delta \in \Delta_S}\right\rangle$ contains positive weights for all domains in $\Delta_S$ and all dimensions within these domains, reflecting their respective importance. We require that $\textstyle\sum_{\delta \in \Delta_S} w_{\delta} = |\Delta_S|$ and that $\forall {\delta \in \Delta_S}:\textstyle\sum_{d \in \delta} w_{d} = 1$.
The membership function of $\widetilde{S}$ is then defined as follows:
$$\mu_{\widetilde{S}}\left(x\right) := \mu_0 \cdot \max_{y \in S}\left(e^{-c \cdot d_C^{\Delta_S}\left(x,y,W\right)}\right)$$
\end{definition}

The sensitivity parameter $c$ controls the overall degree of fuzziness of $\widetilde{S}$ by determining how fast the membership drops to zero (larger values of $c$ result in steeper drops of the membership function). The weights $W$ represent not only the relative importance of the respective domain or dimension for the represented concept, but they also influence the relative fuzziness with respect to this domain or dimension (again, larger weights cause steeper drops).
Note that if $|\Delta_S| = 1$, then $\widetilde{S}$ represents a property, and if $|\Delta_S| > 1$, then $\widetilde{S}$ represents a concept.

The right part of Figure \ref{fig:FSSSS} shows a fuzzy simple star-shaped set $\widetilde{S}$. In this illustration, the $x$ and $y$ axes are assumed to belong to different domains, and are combined with the Manhattan metric using equal weights.

\begin{lemma}
\label{lemma:epsilonNeighborhood}
Let $\widetilde{S} = \left\langle S,\mu_0,c,W\right\rangle$ be a fuzzy simple star-shaped set and let $\alpha \leq \mu_0$. Then, $\widetilde{S}^\alpha$ is equivalent to an $\epsilon$-neighborhood of $S$ with $\epsilon = - \frac{1}{c} \cdot \ln\left(\frac{\alpha}{\mu_0}\right)$.
\end{lemma}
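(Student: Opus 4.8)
The plan is to chase the defining inequality of the $\alpha$-cut through the membership function until it becomes a bound on the distance from $x$ to the core $S$. Throughout I take the \emph{$\epsilon$-neighborhood} of $S$ to mean $N_\epsilon(S) := \{x \in CS \mid \min_{y \in S} d_C^{\Delta_S}(x,y,W) \leq \epsilon\}$, i.e.\ the set of all points whose distance to $S$, measured with the concept's own metric and weights, is at most $\epsilon$.

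First I would unfold the two definitions: a point $x$ lies in $\widetilde{S}^\alpha$ exactly when $\mu_{\widetilde{S}}(x) \geq \alpha$, that is, when $\mu_0 \cdot \max_{y \in S} e^{-c \cdot d_C^{\Delta_S}(x,y,W)} \geq \alpha$. Since $\mu_0 > 0$, this is equivalent to $\max_{y \in S} e^{-c \cdot d_C^{\Delta_S}(x,y,W)} \geq \alpha/\mu_0$, and the hypothesis $\alpha \leq \mu_0$ guarantees $\alpha/\mu_0 \in (0,1]$, so the right-hand side lies in the range of the exponential and its logarithm is well defined.

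The key algebraic step is to push the maximum through the exponential. Because $c > 0$, the map $t \mapsto e^{-ct}$ is strictly decreasing, so maximizing it over $y \in S$ is the same as minimizing the exponent: $\max_{y \in S} e^{-c \cdot d_C^{\Delta_S}(x,y,W)} = e^{-c \cdot \min_{y \in S} d_C^{\Delta_S}(x,y,W)}$. Substituting this and taking the (monotone) natural logarithm of both sides turns the condition into $-c \cdot \min_{y \in S} d_C^{\Delta_S}(x,y,W) \geq \ln(\alpha/\mu_0)$; dividing by $-c$ and flipping the inequality (again using $c > 0$) yields exactly $\min_{y \in S} d_C^{\Delta_S}(x,y,W) \leq -\frac{1}{c}\ln(\alpha/\mu_0) = \epsilon$. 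This is precisely the membership condition defining $N_\epsilon(S)$, so the two sets coincide, establishing the claim.

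The one point that genuinely needs care — and the only real obstacle — is the legitimacy of writing $\max$ and $\min$ rather than $\sup$ and $\inf$, since the argument tacitly assumes the optimizing $y \in S$ exists. I would justify this by noting that $S$ is a finite union of cuboids, each of which is closed and, within the subspace spanned by the domains $\Delta_S$ (the only coordinates that enter $d_C^{\Delta_S}$), bounded; hence $S$ is compact there, the continuous distance function attains its minimum, and every step above is valid for all $x$. The degenerate case $\alpha = 0$, where $\epsilon = +\infty$ and both sides reduce to all of $CS$, can be treated separately or excluded by reading the hypothesis as $0 < \alpha \leq \mu_0$.
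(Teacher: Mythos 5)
Your proof is correct and follows essentially the same chain of equivalences as the paper's: unfolding the $\alpha$-cut condition, converting $\max_{y}e^{-c\,d(x,y)}$ into $e^{-c\,\min_y d(x,y)}$ by monotonicity, and taking logarithms to obtain $\min_{y \in S} d_C^{\Delta_S}(x,y,W) \leq -\frac{1}{c}\ln\left(\frac{\alpha}{\mu_0}\right)$. Your added remarks on attainment of the minimum via compactness of $S$ and on the degenerate case $\alpha = 0$ are sensible refinements the paper leaves implicit.
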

\begin{proof}\belowdisplayskip=-12pt
\begin{align*}
x \in \widetilde{S}^\alpha &\iff \mu_{\widetilde{S}}\left(x\right) = \mu_0 \cdot \max_{y \in S}\left(e^{-c \cdot d_C^{\Delta_S}\left(x,y,W\right)}\right) \geq \alpha \iff e^{-c \cdot \min_{y \in S} \left(d_C^{\Delta_S}\left(x,y,W\right)\right)} \geq \frac{\alpha}{\mu_0}\\
	&\iff \min_{y \in S} d_C^{\Delta_S}\left(x,y,W\right) \leq - \frac{1}{c} \cdot \ln\left(\frac{\alpha}{\mu_0}\right) =: \epsilon
\end{align*}
\end{proof}

\begin{proposition}
Any fuzzy simple star-shaped set $\widetilde{S} = \left\langle S,\mu_0,c,W\right\rangle$ is star-shaped with respect to $P = \textstyle\bigcap_{i=1}^{m} C_i$ under $d_C^{\Delta_S}$.
\end{proposition}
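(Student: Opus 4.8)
The plan is to reduce to the $\alpha$-cut characterization of fuzzy star-shapedness (Definition \ref{def:FuzzyStarShaped}): it suffices to show that every $\alpha$-cut $\widetilde{S}^\alpha$ is either empty or star-shaped under $d_C^{\Delta_S}$ with respect to $P$. Since the membership function attains the value $\mu_0$ at every point of $S$ (take $y=x$ in the $\max$) and never exceeds it (as $e^{-c\,d}\le 1$), the cut $\widetilde{S}^\alpha$ is empty whenever $\alpha>\mu_0$, so only the case $\alpha\le\mu_0$ remains. For such $\alpha$, Lemma \ref{lemma:epsilonNeighborhood} identifies $\widetilde{S}^\alpha$ with the $\epsilon$-neighborhood $N_\epsilon(S)=\{x\in CS \mid \exists\,s\in S:\; d_C^{\Delta_S}(x,s)\le\epsilon\}$ for $\epsilon=-\frac1c\ln(\alpha/\mu_0)\ge 0$ (here and below I suppress the fixed weight argument $W$). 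Thus the whole statement reduces to one geometric claim: \emph{the $\epsilon$-neighborhood of $S$ is star-shaped with respect to $P$}. Note that $S$ itself is already star-shaped with respect to $P$ by combining Lemmas \ref{lemma:UnionOfConvex} and \ref{lemma:Cuboid}, and that $P\subseteq S\subseteq N_\epsilon(S)$, so $P$ is an admissible center set.

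The key preliminary step, which I expect to be the main obstacle, is to characterize the betweenness relation of the combined metric. Because $d_C^{\Delta_S}(x,z)=\sum_{\delta}w_\delta\,d_E^\delta(x,z)$ is a sum over domains with strictly positive weights, and each $d_E^\delta$ obeys the triangle inequality, the equality $d_C^{\Delta_S}(p,y)+d_C^{\Delta_S}(y,z)=d_C^{\Delta_S}(p,z)$ forces equality domain by domain: $d_E^\delta(p,y)+d_E^\delta(y,z)=d_E^\delta(p,z)$ for every $\delta\in\Delta_S$. Since each $d_E^\delta$ is a strictly convex Euclidean metric, this per-domain equality holds if and only if the restriction of $y$ to domain $\delta$ lies on the segment between the restrictions of $p$ and $z$, i.e. $y_\delta=(1-t_\delta)\,p_\delta+t_\delta\,z_\delta$ for some $t_\delta\in[0,1]$, where the interpolation parameters $t_\delta$ may be chosen \emph{independently} in different domains. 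This is exactly the ``Manhattan between domains, Euclidean within a domain'' freedom, and it is the crux of the argument; the rest is a short computation.

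With this characterization I would conclude as follows. Fix $p\in P$, $z\in N_\epsilon(S)$, and $y$ with $B_{d_C^{\Delta_S}}(p,y,z)$, and write $y_\delta=(1-t_\delta)p_\delta+t_\delta z_\delta$ as above. Choose $s\in S$ with $d_C^{\Delta_S}(z,s)\le\epsilon$, and define $s'$ by $s'_\delta:=(1-t_\delta)p_\delta+t_\delta s_\delta$ in each domain $\delta$. By the same characterization $s'$ is between $p$ and $s$, so $B_{d_C^{\Delta_S}}(p,s',s)$ holds; since $p\in P$, $s\in S$, and $S$ is star-shaped with respect to $P$ (Definition \ref{def:StarShapedSet}), we obtain $s'\in S$. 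Finally, in each domain $y_\delta-s'_\delta=t_\delta\,(z_\delta-s_\delta)$, so $d_E^\delta(y,s')=t_\delta\,d_E^\delta(z,s)$ (using $t_\delta\ge 0$), and summing with the domain weights gives
\[
d_C^{\Delta_S}(y,s')=\sum_{\delta}w_\delta\,t_\delta\,d_E^\delta(z,s)\;\le\;\sum_{\delta}w_\delta\,d_E^\delta(z,s)=d_C^{\Delta_S}(z,s)\le\epsilon,
\]
since $0\le t_\delta\le 1$. Hence $y\in N_\epsilon(S)$, which proves that $N_\epsilon(S)$ is star-shaped with respect to $P$. Combined with the reduction in the first paragraph, every $\alpha$-cut of $\widetilde{S}$ is empty or star-shaped with respect to $P$, so $\widetilde{S}$ is star-shaped with respect to $P$ under $d_C^{\Delta_S}$, as claimed.
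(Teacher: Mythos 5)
Your proof is correct, and it shares the paper's overall skeleton --- reduce via Definition \ref{def:FuzzyStarShaped} to the $\alpha$-cuts, identify the non-trivial cuts with $\epsilon$-neighborhoods via Lemma \ref{lemma:epsilonNeighborhood}, and characterize betweenness under $d_C^{\Delta_S}$ as independent per-domain linear interpolation --- but the final and decisive step is genuinely different. The paper descends to the cuboid level: it writes down an explicit coordinate description of the $\epsilon$-neighborhood $C_i^\epsilon$ of each cuboid, verifies by direct computation on the support points that $C_i^\epsilon$ is star-shaped with respect to $C_i$ (hence with respect to $P$), and then takes the union over $i$. You instead treat $N_\epsilon(S)$ as a whole and give a transport argument: given $y$ between $p\in P$ and $z\in N_\epsilon(S)$, you push the witness $s\in S$ for $z$ along the same interpolation parameters $t_\delta$ to obtain $s'$, which lies in $S$ because $S$ is already star-shaped with respect to $P$ (Lemmas \ref{lemma:UnionOfConvex} and \ref{lemma:Cuboid}), and then the homogeneity of each $d_E^\delta$ gives $d_C^{\Delta_S}(y,s')\le d_C^{\Delta_S}(z,s)\le\epsilon$. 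Your route buys generality and economy: it never uses the cuboid structure of $S$ in the final step, so it actually proves that the $\epsilon$-neighborhood of \emph{any} set star-shaped with respect to $P$ under $d_C^{\Delta_S}$ is again star-shaped with respect to $P$, and it avoids having to justify that the explicitly parametrized set $C_i^\epsilon$ really is the metric $\epsilon$-neighborhood of $C_i$. What the paper's cuboid-wise computation buys in exchange is the concrete description of the $\alpha$-cuts as enlarged cuboids, which it reuses later when computing concept sizes in Section \ref{Operations:Hypervolume}.
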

\begin{proof}
For $\alpha \leq \mu_0$, $\widetilde{S}^\alpha$ is an $\epsilon$-neighborhood of $S$ (Lemma \ref{lemma:epsilonNeighborhood}). We can define the $\epsilon$-neighborhood of a cuboid $C_i$ under $d_C^{\Delta_S}$ as
$$C_i^{\epsilon} = \left\{z \in CS\; |\; \forall {d \in D_S}: p^{-}_{id} - u_d \leq z_d \leq p^{+}_{id} + u_d\right\}$$
where the vector $u$ represents the difference between $x \in C_i$ and $z \in C_i^{\epsilon}$. Thus, $u$ must fulfill the following constraints:
$$\sum_{\delta \in \Delta_S} w_{\delta} \cdot \sqrt{\sum_{d \in \delta} w_{d} \cdot \left(u_{d}\right)^2} \leq \epsilon \quad \land \quad \forall d \in D_S: u_d \geq 0$$
Let now $x \in C_i,z \in C_i^{\epsilon}$. 
\begin{align*}
\forall {d \in D_S}:\;\; &x_d = p^{-}_{id} + a_d \;\text{with} \; a_d \in [0, p^{+}_{id} - p^{-}_{id}]\\
&z_d = p^{-}_{id} + b_d \;\;\text{with} \; b_d \in [-u_d, p^{+}_{id} - p^{-}_{id} + u_d]
\end{align*}
We know that a point $y \in CS$ is between $x$ and $z$ with respect to $d_C^{\Delta_S}$ if the following condition is true: 
\begin{align*}
&d_C^{\Delta_S}\left(x,y,W\right) + d_C^{\Delta_S}\left(y,z,W\right) = d_C^{\Delta_S}\left(x,z,W\right)\\
&\iff \forall {\delta \in \Delta_S}: d^{\delta}_E\left(x,y,W_{\delta}\right) + d^{\delta}_E\left(y,z,W_{\delta}\right) = d^{\delta}_E\left(x,z,W_{\delta}\right)\\
&\iff \forall {\delta \in \Delta_S}: \exists {t \in [0,1]}: \forall {{d} \in \delta}: y_{d} = t \cdot x_{d} + \left(1 - t\right) \cdot z_{d}
\end{align*}
The first equivalence holds because $d_C^{\Delta_S}$ is a weighted sum of Euclidean metrics $d^{\delta}_E$. As the weights are fixed and as the Euclidean metric is obeying the triangle inequality, the equation with respect to $d_C^{\Delta_S}$ can only hold if the equation with respect to $d_E^{\delta}$ holds for all $\delta \in \Delta$.

We can thus write the components of $y$ like this:
\begin{align*}
\forall d \in D_S: \exists t \in [0,1]: y_d &= t \cdot x_d + \left(1-t\right)\cdot z_d = t \cdot \left(p^{-}_{id} + a_d\right) + \left(1-t\right) \left(p^{-}_{id} + b_d\right)\\
&= p^{-}_{id} + t \cdot a_d + \left(1-t\right) \cdot b_d = p^{-}_{id} + c_d
\end{align*}

As $c_d := t \cdot a_d + \left(1-t\right) \cdot b_d \in [-u_d, p^{+}_{id} - p^{-}_{id} + u_d]$, it follows that $y \in C_i^{\epsilon}$. So $C_i^{\epsilon}$ is star-shaped with respect to $C_i$ under $d_C^{\Delta_S}$. More specifically, $C_i^{\epsilon}$ is also star-shaped with respect to $P$ under $d_C^{\Delta_S}$. Therefore, $S^{\epsilon} = \bigcup_{i=1}^{m} C_i^{\epsilon}$ is star-shaped under $d_C^{\Delta_S}$ with respect to $P$. Thus, all $\widetilde{S}^\alpha$ with $\alpha \leq \mu_0$ are star-shaped under $d_C^{\Delta_S}$ with respect to $P$. It is obvious that $\widetilde{S}^\alpha = \emptyset$ if $\alpha > \mu_0$, so $\widetilde{S}$ is star-shaped according to Definition \ref{def:FuzzyStarShaped}.
\end{proof}

From now on, we will use the terms ``core'' and ``concept'' to refer to our definitions of simple star-shaped sets and fuzzy simple star-shaped sets, respectively.
\section{Operations on Concepts}
\label{Operations}

In this section, we develop a number of operations, which can be used to create new concepts from existing ones and to describe relations between concepts.

\subsection{Intersection}
\label{Operations:Intersection}

The intersection of two concepts can be interpreted as logical conjunction: Intersecting ``green'' with ``banana'' should result in the concept ``green banana''.\\

If we intersect two \emph{cores} $S_1$ and $S_2$, we simply need to intersect their cuboids. As an intersection of two cuboids is again a cuboid, the result of intersecting two cores can be described as a union of cuboids. It is simple star-shaped if and only if these resulting cuboids have a nonempty intersection. This is only the case if the central regions $P_1$ and $P_2$ of $S_1$ and $S_2$ intersect.\footnote{Note that if the two cores are defined on completely different domains (i.e., $\Delta_{S_1} \cap \Delta_{S_2} = \emptyset$), then their central regions intersect (i.e., $P_1 \cap P_2 \neq \emptyset$), because we can find at least one point in the overall space that belongs to both $P_1$ and $P_2$.}

However, we would like our intersection to result in a valid core even if $P_1 \cap P_2 = \emptyset$. Thus, when intersecting two cores, we might need to apply some repair mechanism in order to restore star-shapedness.\\

We propose to extend the cuboids $C_i$ of the intersection in such a way that they meet in some ``midpoint'' $p^* \in CS$ (e.g., the arithmetic mean of their centers). We create extended versions $C_i^{*}$ of all $C_i$ by defining their support points like this: 
$$\forall {d \in D}: p_{id}^{-*} := \min\left(p_{id}^-, p^*_d\right), \quad p_{id}^{+*} := \max\left(p_{id}^+, p^*_d\right)$$

The intersection of the resulting $C^{*}_i$ contains at least $p^*$, so it is not empty. This means that $S' = \left\langle\Delta_{S_1} \cup \Delta_{S_2}, \left\{C_1^{*},\dots,C_{m}^{*}\right\}\right\rangle$ is again a simple star-shaped set. We denote this modified intersection (consisting of the actual intersection and the application of the repair mechanism) as $S' = I\left(S_1,S_2\right).$\\

\noindent
We define the intersection of two \emph{concepts} as $\widetilde{S}' = I(\widetilde{S}_1,\widetilde{S}_2) := \left\langle S',\mu'_0,c',W'\right\rangle$ with:
\begin{itemize}
	\item $S' := I\left(\widetilde{S}_1^{\alpha'},{\widetilde{S}}_2^{\alpha'}\right)$ (where $\alpha' = \max\left\{\alpha \in [0,1]: {\widetilde{S}}_1^{\alpha} \cap {\widetilde{S}}_2^{\alpha} \neq \emptyset\right\}$)
	\item $\mu'_0 := \alpha'$
	\item $c' := \min\left(c^{\left(1\right)},c^{\left(2\right)}\right)$
	\item $W'$ with weights defined as follows (where $s,t \in [0,1]$)\footnote{In some cases, the normalization constraint of the resulting domain weights might be violated. We can enforce this constraint by manually normalizing the weights afterwards.}:
	\begin{align*}
&\forall {\delta \in \Delta_{S_1} \cap \Delta_{S_2}}: \left(\left(w'_{\delta} := s \cdot w^{\left(1\right)}_{\delta} + \left(1-s\right) \cdot w^{\left(2\right)}_{\delta}\right)\right.\\
& \hspace{3.0cm}\left.\land \forall {d \in \delta}: \left(w'_{d} := t \cdot w^{\left(1\right)}_{d} + \left(1-t\right) \cdot w^{\left(2\right)}_{d}\right)\right)\\
&\forall {\delta \in \Delta_{S_1} \setminus \Delta_{S_2}}: \left(\left(w'_{\delta} := w^{\left(1\right)}_{\delta}\right) \land \forall {d \in \delta}: \left(w'_{d} := w^{\left(1\right)}_{d}\right)\right)\\[-2pt]
&\forall {\delta \in \Delta_{S_2} \setminus \Delta_{S_1}}: \left(\left(w'_{\delta} := w^{\left(2\right)}_{\delta}\right) \land \forall {d \in \delta}: \left(w'_{d} := w^{\left(2\right)}_{d}\right)\right)
	\end{align*}
\end{itemize}

When taking the combination of two somewhat imprecise concepts, the result should not be more precise than any of the original concepts. As the sensitivity parameter $c$ is inversely related to fuzziness, we take the minimum. If a weight is defined for both original concepts, we take a convex combination, and if it is only defined for one of them, we simply copy it. The importance of each dimension and domain to the new concept will thus lie somewhere between its importance with respect to the two original concepts.\\

\begin{figure}[tp]
\centering
\includegraphics[width=\textwidth]{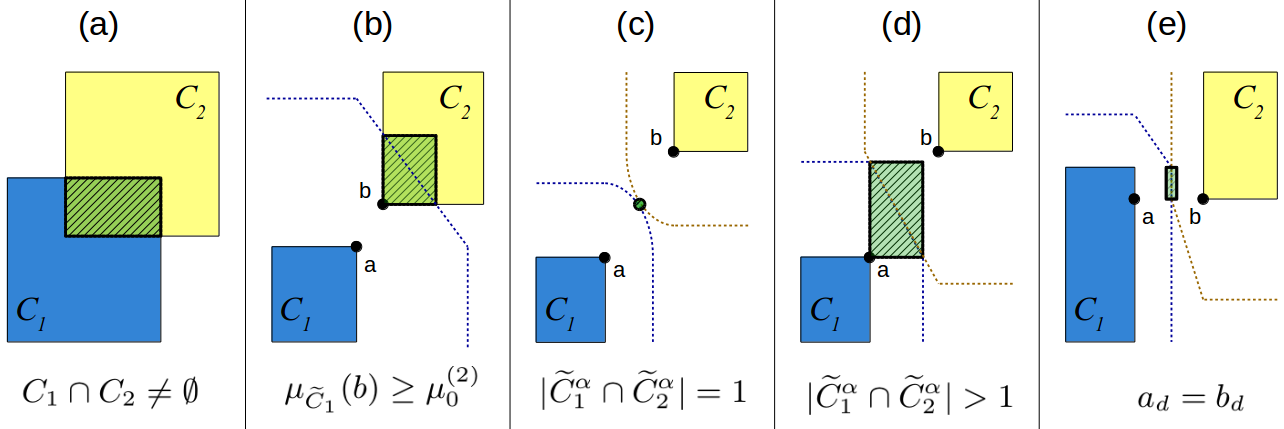}
\caption{Possible results of intersecting two fuzzy cuboids.}
\label{fig:IntersectionResults}
\end{figure}

The key challenge with respect to the intersection of two concepts $\widetilde{S}_1$ and $\widetilde{S}_2$ is to find the new core $S'$, i.e., the highest non-empty $\alpha$-cut intersection of the two concepts. We simplify this problem by iterating over all combinations of cuboids $C_1 \in S_1, C_2 \in S_2$ and by looking at each pair of cuboids individually. Let $a \in C_1$ and $b \in C_2$ be the two closest points from the two cuboids under consideration (i.e., $\forall{x \in C_1, y \in C_2}: d\left(a,b\right) \leq d\left(x,y\right)$). Let us define for a cuboid $C \in S$ its fuzzified version $\widetilde{C}$ as follows (cf. Definition \ref{def:FSSSS}):
$$\mu_{\widetilde{C}}\left(x\right) = \mu_0 \cdot \max_{y \in C}\left(e^{-c \cdot d_C^{\Delta}\left(x,y,W\right)}\right)$$
It is obvious that $\mu_{\widetilde{S}}\left(x\right) = \max_{C_i \in S} \mu_{\widetilde{C}_i}\left(x\right)$. When intersecting two fuzzified cuboids $\widetilde{C}_1$ and $\widetilde{C}_2$, the following results are possible:
\begin{enumerate}
	\item The crisp cuboids have a nonempty intersection (Figure \ref{fig:IntersectionResults}a). In this case, we simply compute their crisp intersection. The $\alpha$-value of this intersection is equal to $\min(\mu_0^{\left(1\right)},\mu_0^{\left(2\right)})$.
	\item The $\mu_0$ parameters are different and the $\mu_0^{\left(i\right)}$-cut of $\widetilde{C}_j$ intersects with $C_i$ (Figure \ref{fig:IntersectionResults}b). In this case, we need to intersect $\widetilde{C}_j^{\mu_0^{\left(i\right)}}$ with $C_i$ and approximate the result by a cuboid. The $\alpha$-value of this intersection is equal to $\mu_0^{\left(i\right)}$.
	\item The intersection of the two fuzzified cuboids consists of a single point $x^*$ lying between $a$ and $b$ (Figure \ref{fig:IntersectionResults}c). In this case, we define a trivial cuboid with $p^- = p^+ = x^*$. The $\alpha$-value of this intersection is $\mu_{\widetilde{C}_1}\left(x^*\right) = \mu_{\widetilde{C}_2}\left(x^*\right)$.
	\item The intersection of the two fuzzified cuboids consists of a set of points (Figure \ref{fig:IntersectionResults}d). This can only happen if the $\alpha$-cut boundaries of both fuzzified cuboids are parallel to each other, which requires multiple domains to be involved and the weights of both concepts to be linearly dependent. Again, we approximate this intersection by a cuboid. We obtain the $\alpha$-value of this intersection by computing $\mu_{\widetilde{C}_1}\left(x\right) = \mu_{\widetilde{C}_2}\left(x\right)$ for some $x$ in the set of points obtained in the beginning.
\end{enumerate}

Moreover, it might happen that both $a$ and $b$ can vary in a certain dimension $d$ (Figure \ref{fig:IntersectionResults}e). In this case, the resulting cuboid needs to be extruded in this dimension.\\

After having computed a cuboid approximation of the intersection for all pairs of fuzzified cuboids, we aggregate them by removing all intersection results with non-maximal $\alpha$. If the remaining set of cuboids has an empty intersection, we perform the repair mechanism as defined above.

\subsection{Unification}
\label{Operations:Union}

A unification of two or more concepts can be used to construct higher-level concepts. For instance, the concept of ``citrus fruit'' can be obtained by unifying ``lemon'', ``orange'', ``grapefruit'', ``lime'', etc.\\

As each core is defined as a union of cuboids, the unification of two \emph{cores} can also be expressed as a union of cuboids. The resulting set is star-shaped if and only if the central regions of the original cores intersect. So after each unification, we might again need to perform a repair mechanism in order to restore star-shapedness. We propose to use the same repair mechanism that is also used for intersections. We denote the modified unification as $S' = U\left(S_1, S_2\right)$.\\

\noindent
We define the unification of two \emph{concepts} as $\widetilde{S}' = U(\widetilde{S}_1, \widetilde{S}_2) := \left\langle S',\mu'_0,c',W'\right\rangle$ with:
\begin{itemize}
	\item $S' := U\left(S_1,S_2\right)$
	\item $\mu'_0 := \max\left(\mu_0^{\left(1\right)},\mu_0^{\left(2\right)}\right)$
	\item $c'$ and $W'$ as described in Section \ref{Operations:Intersection}
\end{itemize}

\begin{proposition}
Let $\widetilde{S}_1 = \langle S_1, \mu_0^{\left(1\right)}, c^{\left(1\right)}, W^{\left(1\right)}\rangle$ and $\widetilde{S}_2 = \langle S_2, \mu_0^{\left(2\right)}, c^{\left(2\right)}, W^{\left(2\right)}\rangle$ be two concepts. If we assume that $\Delta_{S_1} = \Delta_{S_2}$ and $W^{\left(1\right)} = W^{\left(2\right)}$, then $\widetilde{S}_1 \cup \widetilde{S}_2 \subseteq U(\widetilde{S}_1, \widetilde{S}_2) = \widetilde{S}'$.
\end{proposition}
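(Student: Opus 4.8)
The plan is to unfold both the definition of the fuzzy union (pointwise maximum of membership functions, Definition \ref{def:FuzzyOperations}) and the subsethood relation (pointwise $\leq$), so that the goal reduces to showing $\mu_{\widetilde{S}_1 \cup \widetilde{S}_2}(x) \leq \mu_{\widetilde{S}'}(x)$ for every $x \in CS$. Since the left-hand side equals $\max(\mu_{\widetilde{S}_1}(x), \mu_{\widetilde{S}_2}(x))$, it suffices to prove the two inequalities $\mu_{\widetilde{S}_1}(x) \leq \mu_{\widetilde{S}'}(x)$ and $\mu_{\widetilde{S}_2}(x) \leq \mu_{\widetilde{S}'}(x)$ separately; by symmetry I would argue only the first in detail.

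First I would record three monotonicity facts that follow directly from the definition of $U$. From $\mu'_0 = \max(\mu_0^{(1)}, \mu_0^{(2)})$ we get $\mu'_0 \geq \mu_0^{(1)}$; from $c' = \min(c^{(1)}, c^{(2)})$ we get $c' \leq c^{(1)}$, hence $e^{-c' \cdot d} \geq e^{-c^{(1)} \cdot d}$ for every nonnegative distance $d$; and crucially $S_1 \subseteq S'$. The last fact holds because $S' = U(S_1, S_2)$ is assembled from the cuboids of $S_1$ and $S_2$, and the repair mechanism only replaces each cuboid $C_i$ by an extended cuboid $C_i^*$ whose support points satisfy $p_{id}^{-*} \leq p_{id}^-$ and $p_{id}^{+*} \geq p_{id}^+$, so $C_i \subseteq C_i^*$; whether or not repair is triggered, $S_1 \subseteq S_1 \cup S_2 \subseteq S'$.

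The role of the two hypotheses $\Delta_{S_1} = \Delta_{S_2}$ and $W^{(1)} = W^{(2)}$ is to make the distance functions coincide. Under them, $\Delta_{S'} = \Delta_{S_1} \cup \Delta_{S_2} = \Delta_{S_1}$, and the convex combinations defining $W'$ collapse to $W' = W^{(1)} = W^{(2)}$, so for all $x, y$ one has the exact equality $d_C^{\Delta_{S'}}(x, y, W') = d_C^{\Delta_{S_1}}(x, y, W^{(1)})$. Now I would pick a point $y_1^* \in S_1$ attaining $\max_{y \in S_1} e^{-c^{(1)} \cdot d_C^{\Delta_{S_1}}(x, y, W^{(1)})}$, so that $\mu_{\widetilde{S}_1}(x) = \mu_0^{(1)} \cdot e^{-c^{(1)} \cdot d_C^{\Delta_{S_1}}(x, y_1^*, W^{(1)})}$. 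Since $y_1^* \in S_1 \subseteq S'$, this point is admissible in the maximization defining $\mu_{\widetilde{S}'}(x)$, and combining the three monotonicity facts with the distance equality yields
\[
\mu_{\widetilde{S}'}(x) \geq \mu'_0 \cdot e^{-c' \cdot d_C^{\Delta_{S'}}(x, y_1^*, W')} \geq \mu_0^{(1)} \cdot e^{-c^{(1)} \cdot d_C^{\Delta_{S_1}}(x, y_1^*, W^{(1)})} = \mu_{\widetilde{S}_1}(x).
\]

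The main obstacle is not any single computation but rather recognizing where each hypothesis enters: the subset relation $S_1 \subseteq S'$ is what lets a maximizer for $\widetilde{S}_1$ serve as a feasible competitor inside the maximization for $\widetilde{S}'$, while the equality $W^{(1)} = W^{(2)}$ (together with $\Delta_{S_1} = \Delta_{S_2}$) is precisely what prevents the convex-combination weights $W'$ from distorting the metric. Without the weight hypothesis, $d_C^{\Delta_{S'}}(x, y_1^*, W')$ need not be bounded above by $d_C^{\Delta_{S_1}}(x, y_1^*, W^{(1)})$, and the chain of inequalities would break; I would flag this as the conceptually delicate point rather than a technical one.
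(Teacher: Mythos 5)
Your proposal is correct and follows essentially the same route as the paper's proof: both reduce the claim to the pointwise inequality $\mu_{\widetilde{S}_1 \cup \widetilde{S}_2}(x) \leq \mu_{\widetilde{S}'}(x)$ and establish it from the same three facts ($S_1 \cup S_2 \subseteq S'$, $\mu_0' \geq \mu_0^{(i)}$, $c' \leq c^{(i)}$) together with the coincidence of the distance functions forced by $\Delta_{S_1}=\Delta_{S_2}$ and $W^{(1)}=W^{(2)}$. Your splitting of the maximum into two symmetric inequalities is only a cosmetic reorganization of the paper's single chain of inequalities.
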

\begin{proof}
As both $\Delta_{S_1} = \Delta_{S_2}$ and $W^{\left(1\right)} = W^{\left(2\right)}$, we know that 
$$d\left(x,y\right) := d_C^{\Delta_{S_1}}\left(x,y,W^{\left(1\right)}\right) = d_C^{\Delta_{S_2}}\left(x,y,W^{\left(2\right)}\right) = d_C^{\Delta_{S'}}\left(x,y,W'\right)$$
Moreover, $S_1 \cup S_2 \subseteq S'$. Therefore:
\belowdisplayskip=-12pt \begin{align*}
\mu_{\widetilde{S}_1 \cup \widetilde{S}_2}\left(x\right) &= \max\left(\mu_{\widetilde{S}_1}\left(x\right),\mu_{\widetilde{S}_2}\left(x\right)\right)\\
&= \max\left(\max_{y \in S_1} \left(\mu_0^{\left(1\right)} \cdot e^{-c^{\left(1\right)} \cdot d\left(x,y\right)}\right),\max_{y \in S_2} \left(\mu_0^{\left(2\right)} \cdot e^{-c^{\left(2\right)} \cdot d\left(x,y\right)}\right)\right)\\
&\leq \mu'_0 \cdot \max\left(e^{-c^{\left(1\right)} \cdot \min_{y \in S_1} d\left(x,y\right)}, e^{-c^{\left(2\right)} \cdot \min_{y \in S_2} d\left(x,y\right)}\right)\\
&\leq \mu'_0 \cdot e^{-c' \cdot \min\left(\min_{y \in S_1} d\left(x,y\right),\; \min_{y \in S_2} d\left(x,y\right)\right)}\\
&\leq \mu'_0 \cdot e^{-c' \cdot \min_{y \in S'} d\left(x,y\right)} = \mu_{\widetilde{S}'}\left(x\right)
\end{align*}
\end{proof}

\subsection{Subspace Projection}
\label{Operations:Projection}

Projecting a concept onto a subspace corresponds to focusing on certain domains while completely ignoring others. For instance, projecting the concept ``apple'' onto the color domain results in a property that describes the typical color of apples.\\

Projecting a cuboid onto a subspace results in a cuboid. As one can easily see, projecting a \emph{core} onto a subspace results in a valid core. We denote the projection of a core $S$ onto domains $\Delta_{S'} \subseteq \Delta_S$ as $S' = P(S, \Delta_{S'})$.\\

We define the projection of a \emph{concept} $\widetilde{S}$ onto domains $\Delta_{S'} \subseteq \Delta_S$ as $\widetilde{S}' = P(\widetilde{S}, \Delta_{S'}) := \left\langle S', \mu'_0, c', W'\right\rangle$ with:
\begin{itemize}
	\item $S' := P\left(S,\Delta_{S'}\right)$
	\item $\mu'_0 := \mu_0$
	\item $c' := c$
	\item $W' := \left\langle\left\{|\Delta_S'| \cdot \frac{w_{\delta}}{\sum_{\delta' \in \Delta_{S'}} w_{\delta'}}\right\}_{\delta \in \Delta_{S'}},\left\{W_{\delta}\right\}_{\delta \in \Delta_{S'}}\right\rangle$
\end{itemize}

Note that we only apply minimal changes to the parameters: $\mu_0$ and $c$ stay the same, only the domain weights are updated in order to not violate their normalization constraint.\\

Projecting a set onto two complementary subspaces and then intersecting these projections again in the original space yields a superset of the original set. This is intuitively clear for cores and can also be shown for concepts under one additional constraint:
\begin{proposition} 
Let $\widetilde{S} = \left\langle S, \mu_0, c, W\right\rangle$ be a concept. Let $\widetilde{S}_1 = P(\widetilde{S}, \Delta_1)$ and $\widetilde{S}_2 = P(\widetilde{S}, \Delta_2)$ with $\Delta_1 \cup \Delta_2 = \Delta_S$ and $\Delta_1 \cap \Delta_2 = \emptyset$. Let $\widetilde{S}' = I(\widetilde{S}_1, \widetilde{S}_2)$ as described in Section \ref{Operations:Intersection}. If $\sum_{\delta \in \Delta_1} w_{\delta} = |\Delta_1|$ and $\sum_{\delta \in \Delta_2} w_{\delta} = |\Delta_2|$, then $\widetilde{S} \subseteq \widetilde{S}'$.
\end{proposition}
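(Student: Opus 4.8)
The plan is to exploit the two hypotheses $\sum_{\delta \in \Delta_1} w_{\delta} = |\Delta_1|$ and $\sum_{\delta \in \Delta_2} w_{\delta} = |\Delta_2|$ to show that none of the three operations (the two projections and the intersection) alters the fuzziness parameters $\mu_0$, $c$, or $W$. Once this is established, the claimed inclusion $\widetilde{S} \subseteq \widetilde{S}'$ collapses to a purely crisp inclusion $S \subseteq S'$ of the cores, since identical $\mu_0$, $c$, $W$ mean $\widetilde{S}$ and $\widetilde{S}'$ use the very same membership recipe and distance $d_C^{\Delta_S}$.

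First I would compute the two projections. By the projection rule of Section \ref{Operations:Projection}, the projected domain weight of $\delta \in \Delta_1$ inside $\widetilde{S}_1$ is $|\Delta_1| \cdot w_{\delta} / \sum_{\delta' \in \Delta_1} w_{\delta'}$; the hypothesis makes the denominator equal to $|\Delta_1|$, so this weight is exactly $w_{\delta}$, and the intra-domain weights $W_{\delta}$ are copied verbatim. Hence $\widetilde{S}_1 = \langle S_1, \mu_0, c, W^{(1)}\rangle$ with $W^{(1)}$ agreeing with $W$ on all of $\Delta_1$, and symmetrically $\widetilde{S}_2$ agrees with $W$ on $\Delta_2$; in particular $\mu_0^{(1)} = \mu_0^{(2)} = \mu_0$ and $c^{(1)} = c^{(2)} = c$. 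This is precisely the step where the weight-sum hypothesis is indispensable: without it the projected weights would be rescaled and the distances would no longer match.

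Next I would pin down the parameters and the core produced by $I(\widetilde{S}_1, \widetilde{S}_2)$. Because $\Delta_1 \cap \Delta_2 = \emptyset$, the cores $S_1 = \bigcup_i P(C_i, \Delta_1)$ and $S_2 = \bigcup_j P(C_j, \Delta_2)$ always intersect; indeed every original cuboid is recovered as $P(C_i, \Delta_1) \cap P(C_i, \Delta_2) = C_i$, so $S_1 \cap S_2 \supseteq S \neq \emptyset$. This forces the highest non-empty cut to be $\alpha' = \mu_0$, whence $\mu_0' = \mu_0$ and $c' = \min(c,c) = c$; and since the two subspaces share no domain, the weight-recombination rule merely copies each $w_{\delta}$ and each $W_{\delta}$, with $\sum_{\delta \in \Delta_S} w_{\delta} = |\Delta_S|$ already holding so that no renormalization fires, giving $W' = W$ and $\Delta_{S'} = \Delta_S$. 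With $\alpha' = \mu_0$ the objects being intersected are the crisp cores, so $I$ degenerates to pairwise crisp cuboid intersection (case 1 of Section \ref{Operations:Intersection}, with none of the lossy cuboid approximations of cases 3 and 4); each pair $P(C_i,\Delta_1) \cap P(C_j,\Delta_2)$ is the cuboid carrying the $\Delta_1$-bounds of $C_i$ and the $\Delta_2$-bounds of $C_j$, so $S' = S_1 \cap S_2 = \bigcup_{i,j}(P(C_i,\Delta_1) \cap P(C_j,\Delta_2))$, whose diagonal terms $i=j$ reconstruct each $C_i$ and yield $S \subseteq S'$. Finally, the central region of $S'$ equals the original $P$, which is nonempty, so the repair mechanism is not triggered and $S'$ is a legitimate core.

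To conclude, since $\widetilde{S}'$ carries exactly the same $\mu_0$, $c$, and $W$ (hence the same metric $d_C^{\Delta_S}$) as $\widetilde{S}$, and since $S \subseteq S'$, the membership functions satisfy $\mu_{\widetilde{S}'}(x) = \mu_0 \max_{y \in S'} e^{-c \cdot d_C^{\Delta_S}(x,y,W)} \geq \mu_0 \max_{y \in S} e^{-c \cdot d_C^{\Delta_S}(x,y,W)} = \mu_{\widetilde{S}}(x)$ for every $x \in CS$, which is exactly $\widetilde{S} \subseteq \widetilde{S}'$. I expect the main obstacle to be the bookkeeping of the third paragraph: rigorously verifying that at $\alpha' = \mu_0$ the abstract operator $I$ genuinely reduces to exact cuboid intersection (so nothing is lost to the cuboid approximations), that the recombination of projected cuboids recovers at least the original cuboids, and that the central region remains $P$ so that no repair step can shrink or distort the resulting core.
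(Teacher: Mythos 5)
Your proof is correct and follows essentially the same route as the paper's: show that the weight-sum hypotheses keep the projections from rescaling $W$, that disjointness of $\Delta_1,\Delta_2$ keeps the intersection from mixing weights, that $\mu_0'=\mu_0$ and $c'=c$, and that $S\subseteq S'$, so the membership inequality follows immediately. The only difference is that you explicitly verify the steps the paper dismisses as ``already known'' or ``easily seen'' (in particular that $\alpha'=\mu_0$, that the diagonal cuboid pairs reconstruct $S$, and that no repair step fires), which is a welcome amount of extra rigor but not a different argument.
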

\begin{proof}
We already know that $S \subseteq I\left(P\left(S, \Delta_1\right), P\left(S, \Delta_2\right)\right) = S'$. Moreover, one can easily see that $\mu'_0 = \mu_0$ and $c' = c$.
\begin{align*}
&\mu_{\widetilde{S}}\left(x\right) = \max_{y \in S} \left(\mu_0 \cdot e^{-c \cdot d_C^{\Delta_S}\left(x,y,W\right)}\right)\overset{!}{\leq} \max_{y \in S'} \left(\mu'_0 \cdot e^{-c' \cdot d_C^{\Delta_S}\left(x,y,W'\right)}\right) = \mu_{\widetilde{S}'}\left(x\right)
\end{align*}
This holds if and only if $W = W'$. $W^{\left(1\right)}$ only contains weights for $\Delta_1$, whereas $W^{\left(2\right)}$ only contains weights for $\Delta_2$. As $\sum_{\delta \in \Delta_i} w_{\delta} = |\Delta_i|$ (for $i \in \left\{1,2\right\}$), the weights are not changed during the projection. As $\Delta_1 \cap \Delta_2 = \emptyset$, they are also not changed during the intersection, so $W' = W$.
\end{proof}

\subsection{Axis-Parallel Cut}
\label{Operations:Cut}

In a concept formation process, it might happen that over-generalized concepts are learned (e.g., a single concept that represents both dogs and cats). If it becomes apparent that a finer-grained conceptualization is needed, the system needs to be able to split its current concepts into multiple parts.\\ 

One can split a concept $\widetilde{S} = \left\langle S, \mu_0, c, W \right\rangle$ into two parts by selecting a value $v$ on a dimension $d$ and by splitting each cuboid $C \in S$ into two child cuboids $C^{\left(+\right)} := \left\{x \in C \;|\; x_d \geq v\right\}$ and $C^{\left(-\right)} := \left\{x \in C \;|\; x_d \leq v\right\}$.\footnote{A strict inequality in the definition of $C^{\left(+\right)}$ or $C^{\left(-\right)}$ would not yield a cuboid.} Both $S^{\left(+\right)} := \bigcup_{i=1}^{m} C_i^{\left(+\right)}$ and $S^{\left(-\right)} := \bigcup_{i=1}^{m} C_i^{\left(-\right)}$ are still valid cores: They are both a union of cuboids and one can easily show that the intersection of these cuboids is not empty. We define $\widetilde{S}^{\left(+\right)} := \left\langle S^{\left(+\right)}, \mu_0, c, W \right\rangle$ and $\widetilde{S}^{\left(-\right)} := \left\langle S^{\left(-\right)}, \mu_0, c, W \right\rangle$, both of which are by definition valid concepts. Note that by construction, $S^{\left(-\right)} \cup S^{\left(+\right)} = S$ and $U(\widetilde{S}^{\left(-\right)}, \widetilde{S}^{\left(+\right)}) = \widetilde{S}$.

\subsection{Concept Size}
\label{Operations:Hypervolume}
The size of a concept gives an intuition about its specificity: Large concepts are more general and small concepts are more specific. This is one obvious aspect in which one can compare two concepts to each other.\\

One can use a measure $M$ to describe the size of a fuzzy set. It can be defined in our context as follows (cf. \cite{Bouchon-Meunier1996}):
\begin{definition}
A measure $M$ on a conceptual space $CS$ is a function $M: \mathcal{F}\left(CS\right) \rightarrow \mathbb{R}^+_0$ with $M\left(\emptyset\right) = 0$ and $\widetilde{A} \subseteq \widetilde{B} \Rightarrow M(\widetilde{A}) \leq M(\widetilde{B})$, where $\mathcal{F}\left(CS\right)$ is the fuzzy power set of $CS$.
\end{definition}

A common measure for fuzzy sets is the integral over the set's membership function, which is equivalent to the Lebesgue integral over the fuzzy set's $\alpha$-cuts:
\begin{equation}
M\left(\widetilde{A}\right) := \int_{CS} \mu_{\widetilde{A}}\left(x\right)\; dx = \int_{0}^1 V\left(\widetilde{A}^{\alpha}\right)\; d\alpha
\label{eqn:integral}
\end{equation}
We use $V(\widetilde{A}^{\alpha})$ to denote the volume of a fuzzy set's $\alpha$-cut.
One can easily see that we can use the inclusion-exclusion formula (cf. e.g., \cite{Bogart1989}) to compute the overall measure of a concept $\widetilde{S}$ based on the measure of its fuzzified cuboids\footnote{Note that the intersection of two overlapping fuzzified cuboids is again a fuzzified cuboid.}:
\begin{equation}
M\left(\widetilde{S}\right) = \sum_{l=1}^m \left(\left(-1\right)^{l+1} \cdot \sum_{\substack{\left\{i_1,\dots,i_l\right\}\\\subseteq\left\{1,\dots,m\right\}}}M\left(\bigcap_{i \in \left\{i_1,\dots,i_l\right\}} \widetilde{C}_i\right)\right)
\label{eqn:inclusionExclusion}
\end{equation}
The outer sum iterates over the number of cuboids under consideration (with $m$ being the total number of cuboids in S) and the inner sum iterates over all sets of exactly $l$ cuboids. The overall formula generalizes the observation that $|A\; \cup\; B| = |A| + |B| - |A\; \cap\; B|$ from two to $m$ sets.\\

In order to derive $M(\widetilde{C})$, we first describe how to compute $V(\widetilde{C}^{\alpha})$, i.e., the size of a fuzzified cuboid's $\alpha$-cut. Using Equation \ref{eqn:integral}, we can then derive $M(\widetilde{C})$, which we can in turn insert into Equation \ref{eqn:inclusionExclusion} to compute the overall size of $\widetilde{S}$.\\

Figure \ref{fig:2DAlphaCut} illustrates the $\alpha$-cut of a fuzzified two-dimensional cuboid both under $d_E$ (left) and under $d_M$ (right). From Lemma \ref{lemma:epsilonNeighborhood} we know that one can interpret each $\alpha$-cut as an $\epsilon$-neighborhood of the original $C$ with $\epsilon = -\frac{1}{c} \cdot \ln\left(\frac{\alpha}{\mu_0}\right)$.

\begin{figure}[tp]
\centering
\includegraphics[width = 0.8\textwidth]{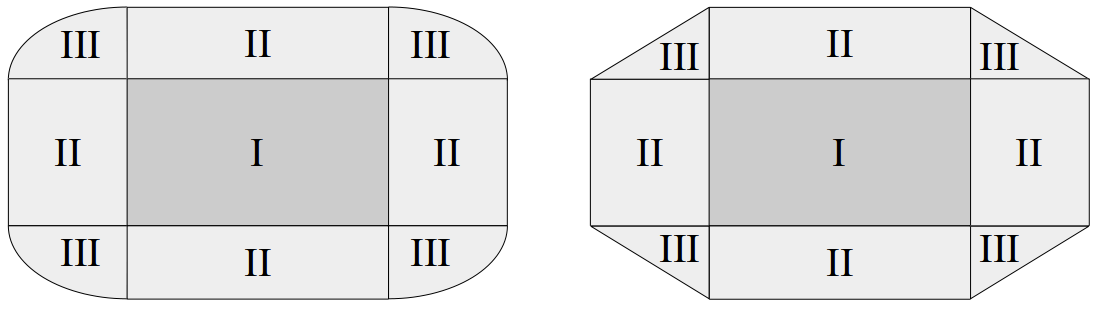}
\caption{$\alpha$-cut of a fuzzified cuboid under $d_E$ (left) and $d_M	$ (right), respectively.}
\label{fig:2DAlphaCut}
\end{figure}

$V(\widetilde{C}^{\alpha})$ can be described as a sum of different components. Let us use the shorthand notation $b_d := p_d^+ - p_d^-$.
Looking at Figure \ref{fig:2DAlphaCut}, one can see that all components of $V(\widetilde{C}^{\alpha})$ can be described by ellipses\footnote{Note that ellipses under $d_M$ have the form of streched diamonds.}: Component I is a zero-dimensional ellipse (i.e., a point) that was extruded in two dimensions with extrusion lengths of $b_1$ and $b_2$, respectively. Component II consists of two one-dimensional ellipses (i.e., line segments) that were extruded in one dimension, and component III is a two-dimensional ellipse. 

Let us denote by $\Delta_{\left\{d_1,\dots,d_i\right\}}$ the domain structure obtained by eliminating from $\Delta$ all dimensions $d \in D\setminus\left\{d_1, \dots, d_i\right\}$. Moreover, let $V\left(r, \Delta, W\right)$ be the hypervolume of a hyperball under $d_C^\Delta\left(\cdot,\cdot, W\right)$ with radius $r$. In this case, a hyperball is the set of all points with a distance of at most $r$ (measured by $d_C^\Delta\left(\cdot,\cdot, W\right)$) to a central point. Note that the weights $W$ can cause this ball to have the form of an ellipse. For instance, in Figure \ref{fig:2DAlphaCut}, we assume that $w_{d_1} < w_{d_2}$ which means that we allow larger differences with respect to $d_1$ than with respect to $d_2$. This causes the hyperballs to be streched in the $d_1$ dimension, thus obtaining the shape of an ellipse.
We can in general describe $V(\widetilde{C}^{\alpha})$ as follows:
$$V\left(\widetilde{C}^{\alpha}\right) = \sum_{i=0}^n \left( \sum_{\substack{\left\{d_1,\dots,d_i\right\}\\ \subseteq D}} \left( \prod_{\substack{d \in\\ D\setminus\left\{d_1,\dots,d_i\right\}}} b_d \right) \cdot V\left( -\frac{1}{c} \cdot \ln\left(\frac{\alpha}{\mu_0}\right),\Delta_{\left\{d_1, \dots, d_i\right\}},W \right)\right)$$
The outer sum of this formula runs over the number of dimensions with respect to which a given point $x \in \widetilde{C}^{\alpha}$ lies outside of $C$. We then sum over all combinations $\left\{d_1,\dots,d_i\right\}$ of dimensions for which this could be the case, compute the volume $V\left(\cdot,\cdot,\cdot\right)$ of the $i$-dimensional hyperball in these dimensions, and extrude this intermediate result in all remaining dimensions by multiplying with $\prod_{d \in D\setminus\left\{d_1,\dots,d_i\right\}} b_d$.

Let us illustrate this formula for the $\alpha$-cuts shown in Figure \ref{fig:2DAlphaCut}: For $i = 0$, we can only select the empty set for the inner sum, so we end up with $b_1 \cdot b_2$, which is the size of the original cuboid (i.e., component I). For $i = 1$, we can either pick $\left\{d_1\right\}$ or $\left\{d_2\right\}$ in the inner sum. For $\left\{d_1\right\}$, we compute the size of the left and right part of component II by multiplying $V( -\frac{1}{c} \cdot \ln(\frac{\alpha}{\mu_0}),\Delta_{\left\{d_1\right\}},W)$ (i.e., their combined width) with $b_2$ (i.e., their height). For $\left\{d_2\right\}$, we analogously compute the size of the upper and the lower part of component II. Finally, for $i = 2$, we can only pick $\left\{d_1, d_2\right\}$ in the inner sum, leaving us with $V( -\frac{1}{c} \cdot \ln(\frac{\alpha}{\mu_0}),\Delta ,W)$, which is the size of component III.
One can easily see that the formula for $V(\widetilde{C}^{\alpha})$ also generalizes to higher dimensions.\\

As we have shown in \cite{Bechberger2017Hyperball}, $V\left(r, \Delta, W\right)$ can be computed as follows, where $n_\delta = |\delta|$:
$$V\left(r,\Delta, W\right) = \frac{1}{\prod_{\delta \in \Delta} w_{\delta} \cdot \prod_{d \in \delta} \sqrt{w_d}} \cdot \frac{r^n}{n!} \cdot \prod_{\delta \in \Delta} \left(n_\delta! \cdot \frac{\pi^{\frac{n_\delta}{2}}}{\Gamma\left(\frac{n_\delta}{2}+1\right)}\right)$$

Defining $\delta\left(d\right)$ as the unique $\delta \in \Delta$ with $d \in \delta$, and $a_d := w_{\delta\left(d\right)} \cdot \sqrt{w_{d}} \cdot b_d \cdot c$, we can use this observation to rewrite $V(\widetilde{C}^{\alpha})$:
\begin{align*}
V\left(\widetilde{C}^\alpha\right) &=  
\frac{1}{c^n\prod_{d \in D} w_{\delta\left(d\right)} \sqrt{w_d}}
\sum_{i=0}^{n} \left( 
\frac{\left(-1\right)^i \cdot \ln\left(\frac{\alpha}{\mu_0}\right)^i}{i!} \cdot 
\sum_{\substack{\left\{d_1,\dots,d_i\right\}\\ \subseteq D}} 
\left(\prod_{\substack{d \in \\D \setminus \left\{d_1,\dots,d_i\right\}}} a_d\right) \cdot \right.\\
&\hspace{4.5cm}\left.\prod_{\substack{\delta \in \\ \Delta_{\left\{d_1,\dots,d_i\right\}}}} \left(
n_\delta! \cdot \frac{\pi^{\frac{n_\delta}{2}}}{\Gamma\left(\frac{n_\delta}{2}+1\right)}\right)\right)\\
\end{align*}
We can now solve Equation \ref{eqn:integral} to compute $M(\widetilde{C})$ by using the following lemma:

\begin{lemma}
\label{lemma:logIntegral}
$\forall n \in \mathbb{N}: \int_0^{1} \ln\left(x\right)^n dx = \left(-1\right)^n \cdot n!$ 
\end{lemma}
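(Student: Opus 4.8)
The plan is to reduce this integral to the Gamma function via a change of variables. Substituting $x = e^{-t}$ (so that $\ln(x) = -t$ and $dx = -e^{-t}\,dt$) transforms the integral over $x \in (0,1]$ into an integral over $t \in [0,\infty)$. First I would rewrite
$$\int_0^1 \ln(x)^n\,dx = \int_0^\infty (-t)^n e^{-t}\,dt = (-1)^n \int_0^\infty t^n e^{-t}\,dt,$$
and then recognize the remaining integral as $\Gamma(n+1) = n!$. Since $\Gamma$ already appears in the hypervolume formula above, this route keeps the proof within the machinery the paper is using.

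Alternatively, and perhaps more self-contained, I would proceed by induction on $n$ using integration by parts. Writing $I_n := \int_0^1 \ln(x)^n\,dx$, the base case $I_0 = \int_0^1 1\,dx = 1 = (-1)^0 \cdot 0!$ is immediate. For the inductive step, choosing $u = \ln(x)^n$ and $dv = dx$ gives
$$I_n = \Big[x \ln(x)^n\Big]_0^1 - n\int_0^1 \ln(x)^{n-1}\,dx = \Big[x\ln(x)^n\Big]_0^1 - n\,I_{n-1}.$$
Provided the boundary term vanishes, this yields the recurrence $I_n = -n\,I_{n-1}$, which together with the base case gives $I_n = (-1)^n \cdot n!$ by a one-line induction.

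The main obstacle---modest, but the only point requiring care---is handling the boundary term $\big[x\ln(x)^n\big]_0^1$. At $x=1$ the factor $\ln(1)^n = 0$ (for $n \geq 1$), so the upper endpoint contributes nothing. At the lower endpoint I would need to verify $\lim_{x\to 0^+} x\ln(x)^n = 0$; this follows from the standard fact that any positive power of $x$ dominates any power of $|\ln x|$, which one can confirm via repeated application of L'H\^opital's rule or via the substitution $x = e^{-t}$ again. In the Gamma-function approach the analogous subtlety is simply the convergence of $\int_0^\infty t^n e^{-t}\,dt$, which is classical. Either way the argument is short; I would favor the substitution route for its brevity and for its consistency with the $\Gamma$ notation already in play.
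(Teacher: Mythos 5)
Your primary argument---substituting $x = e^{-t}$ and recognizing $\int_0^\infty t^n e^{-t}\,dt = \Gamma(n+1) = n!$---is correct and is exactly the paper's proof, which performs the same substitution (written there as $x = e^t$ followed by $s = -t$) and then applies the definition of the $\Gamma$ function. Your alternative induction-by-parts route is also sound, but the substitution route you favor coincides with the paper's.
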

\begin{proof}
Substitute $x = e^t$ and $s = -t$, then apply the definition of the $\Gamma$ function.
\end{proof}

\begin{proposition}
\label{proposition:Measure}
The measure of a fuzzified cuboid $\widetilde{C}$ can be computed as follows:
\begin{align*}
M\left(\widetilde{C}\right) &= \frac{\mu_0}{c^n\prod_{d \in D} w_{\delta\left(d\right)} \sqrt{w_d}}
\sum_{i=0}^{n} \left( 
\sum_{\substack{\left\{d_1,\dots,d_i\right\}\\ \subseteq D}} 
\left(\prod_{\substack{d \in \\ D \setminus \left\{d_1,\dots,d_i\right\}}} a_d\right) \cdot \right.\\
&\hspace{5cm}\left.\prod_{\substack{\delta \in\\ \Delta_{\left\{d_1,\dots,d_i\right\}}}} \left(
n_\delta! \cdot \frac{\pi^{\frac{n_\delta}{2}}}{\Gamma\left(\frac{n_\delta}{2}+1\right)}\right)\right)
\end{align*}
\end{proposition}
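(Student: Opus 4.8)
The plan is to substitute the closed form for $V(\widetilde{C}^\alpha)$ derived immediately above directly into the defining integral (Equation \ref{eqn:integral}) and carry out the $\alpha$-integration term by term. The crucial preliminary observation is that $V(\widetilde{C}^\alpha) = 0$ whenever $\alpha > \mu_0$, since no point attains a membership above $\mu_0$ and hence the corresponding $\alpha$-cut is empty. This lets me replace the upper limit of integration, writing $M(\widetilde{C}) = \int_0^1 V(\widetilde{C}^\alpha)\,d\alpha = \int_0^{\mu_0} V(\widetilde{C}^\alpha)\,d\alpha$. Restricting to $[0,\mu_0]$ rather than $[0,1]$ is exactly what aligns the computation with Lemma \ref{lemma:logIntegral}.

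Next I would pull all $\alpha$-independent factors out of the integral. In the expression for $V(\widetilde{C}^\alpha)$, the prefactor $\frac{1}{c^n \prod_{d\in D} w_{\delta(d)}\sqrt{w_d}}$, the combinatorial sums over dimension subsets, the extrusion products $\prod a_d$, and the domain factors $n_\delta! \cdot \pi^{n_\delta/2}/\Gamma(n_\delta/2+1)$ are all constant in $\alpha$. Because everything is a finite sum, I may interchange summation and integration by linearity, so the only quantity that actually needs integrating is the single factor $\ln(\alpha/\mu_0)^i$ occurring in the $i$-th summand.

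The core computation is therefore $\int_0^{\mu_0} \ln(\alpha/\mu_0)^i\,d\alpha$. Substituting $\beta = \alpha/\mu_0$ (so that $d\alpha = \mu_0\,d\beta$ and the limits become $0$ and $1$) turns this into $\mu_0 \int_0^1 \ln(\beta)^i\,d\beta = \mu_0 \cdot (-1)^i \cdot i!$ by Lemma \ref{lemma:logIntegral}. Multiplying by the accompanying prefactor $(-1)^i/i!$ collapses the term to $\frac{(-1)^i}{i!}\cdot \mu_0\,(-1)^i\,i! = \mu_0$. Thus every summand in the outer sum over $i$ contributes its $\alpha$-independent part scaled by the common factor $\mu_0$, which is precisely the formula claimed in the statement.

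I do not expect any genuine obstacle; the argument is essentially bookkeeping once the integral is set up. The one point requiring care, and the only place the derivation could go wrong, is the truncation of the integration domain to $[0,\mu_0]$: integrating naively up to $1$ would misapply Lemma \ref{lemma:logIntegral}, whose statement is calibrated to the interval $[0,1]$, and would spoil the clean cancellation of the $(-1)^i\,i!$ factors. Everything else reduces to linearity of the integral together with a single change of variables.
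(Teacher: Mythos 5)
Your proposal is correct and follows essentially the same route as the paper's (one-line) proof: substitute $x = \alpha/\mu_0$ in Equation~\ref{eqn:integral} and apply Lemma~\ref{lemma:logIntegral}, with the $(-1)^i/i!$ prefactor cancelling against the $(-1)^i\, i!$ from the lemma to leave the factor $\mu_0$. Your explicit justification for truncating the integration range to $[0,\mu_0]$ (since $\widetilde{C}^\alpha=\emptyset$ for $\alpha>\mu_0$) is a detail the paper leaves implicit, and it is exactly the right point to be careful about.
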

\begin{proof}
Substitute $x = \frac{\alpha}{\mu_0}$ in Equation \ref{eqn:integral} and apply Lemma \ref{lemma:logIntegral}.
\end{proof}

Although the formula for $M(\widetilde{C})$ is quite complex, it can be easily implemented via a set of nested loops. As mentioned earlier, we can use the result from Proposition \ref{proposition:Measure} in combination with the inclusion-exclusion formula (Equation \ref{eqn:inclusionExclusion}) to compute $M(\widetilde{S})$ for any concept $\widetilde{S}$. Also Equation \ref{eqn:inclusionExclusion} can be easily implemented via a set of nested loops.
Note that $M(\widetilde{S})$ is always computed only on $\Delta_S$, i.e., the set of domains on which $\widetilde{S}$ is defined.

\subsection{Subsethood}
\label{Operations:Subsethood}
In order to represent knowledge about a hierarchy of concepts, one needs to be able to determine whether one concept is a subset of another concept. For instance, the fact that $\widetilde{S}_{Granny Smith} \subseteq \widetilde{S}_{apple}$ indicates that Granny Smith is a hyponym of apple.\footnote{One could also say that the fuzzified cuboids $\widetilde{C}_i$ are sub-concepts of $\widetilde{S}$, because $\widetilde{C}_i \subseteq \widetilde{S}$.}\\

\noindent
The classic definition of subsethood for fuzzy sets reads as follows (cf. Definition \ref{def:FuzzyOperations}):
$$\widetilde{A} \subseteq \widetilde{B} :\iff \forall {x \in CS}: \mu_{\widetilde{A}}\left(x\right) \leq \mu_{\widetilde{B}}\left(x\right)$$
This definition has the weakness of only providing a binary/crisp notion of subsethood. It is desirable to define a \emph{degree} of subsethood in order to make more fine-grained distinctions. 
Many of the definitions for degrees of subsethood proposed in the fuzzy set literature \cite{Bouchon-Meunier1996,Young1996} require that the underlying universe is discrete. The following definition \cite{Kosko1992} works also in a continuous space and is conceptually quite straightforward: 
$$Sub\left(\widetilde{A},\widetilde{B}\right) = \frac{M\left(\widetilde{A} \cap \widetilde{B}\right)}{M\left(\widetilde{A}\right)} \quad \text{with a measure } M$$
One can interpret this definition intuitively as the ``percentage of $\widetilde{A}$ that is also in $\widetilde{B}$''. It can be easily implemented based on the intersection defined in Section \ref{Operations:Intersection} and the measure defined in Section \ref{Operations:Hypervolume}: 
$$Sub\left(\widetilde{S}_1,\widetilde{S}_2\right) := \frac{M\left(I\left(\widetilde{S}_1, \widetilde{S}_2\right)\right)}{M\left(\widetilde{S}_1\right)}$$ 
If $\widetilde{S}_1$ and $\widetilde{S}_2$ are not defined on the same domains, then we first project them onto their shared subset of domains before computing their degree of subsethood.\\

When computing the intersection of two concepts with different sensitivity parameters $c^{\left(1\right)}, c^{\left(2\right)}$ and different weights $W^{\left(1\right)}, W^{\left(2\right)}$, one needs to define new parameters $c'$ and $W'$ for the resulting concept. In Section \ref{Operations:Intersection}, we have argued that the sensitivity parameter $c'$ should be set to the minimum of $c^{\left(1\right)}$ and $c^{\left(2\right)}$. Now if $c^{\left(1\right)} > c^{\left(2\right)}$, then $c' = \min\left(c^{\left(1\right)}, c^{\left(2\right)}\right) = c^{\left(2\right)} < c^{\left(1\right)}$. It might thus happen that $M(I(\widetilde{S}_1, \widetilde{S}_2)) > M(\widetilde{S}_1)$, and that therefore $Sub(\widetilde{S}_1,\widetilde{S}_2) > 1$. As we would like to confine $Sub(\widetilde{S}_1,\widetilde{S}_2)$ to the interval $[0,1]$, we should use the same $c$ and $W$ for computing both $M(I(\widetilde{S}_1,\widetilde{S}_2))$ and $M(\widetilde{S}_1)$. 

When judging whether $\widetilde{S}_1$ is a subset of $\widetilde{S}_2$, we can think of $\widetilde{S}_2$ as setting the context by determining the relative importance of the different domains and dimensions as well as the degree of fuzziness. For instance, when judging whether tomatoes are vegetables, we focus our attention on the features that are crucial to the definition of the ``vegetable'' concept. We thus propose to use $c^{\left(2\right)}$ and $W^{\left(2\right)}$ when computing $M(I(\widetilde{S}_1,\widetilde{S}_2))$ and $M(\widetilde{S}_1)$.

\subsection{Implication}
\label{Operations:Implication}

Implications play a fundamental role in rule-based systems and all approaches that use formal logics for knowledge representation. It is therefore desirable to define an implication function on concepts, such that one is able to express facts like $apple \Rightarrow red$ within our formalization.\\

In the fuzzy set literature \cite{Mas2007}, a fuzzy implication is defined as a generalization of the classical crisp implication. Computing the implication of two fuzzy sets typically results in a new fuzzy set which describes the local validity of the implication for each point in the space. In our setting, we are however more interested in a single number that indicates the overall validity of the implication $apple \Rightarrow red$.
We propose to reuse the definition of subsethood from Section \ref{Operations:Subsethood}: It makes intuitive sense in our geometric setting to say that $apple \Rightarrow red$ is true to the degree to which $apple$ is a subset of $red$. We therefore define:
$$Impl\left(\widetilde{S}_1,\widetilde{S}_2\right) := Sub\left(\widetilde{S}_1,\widetilde{S}_2\right)$$

\subsection{Similarity and Betweenness}
\label{Operations:SimilarityBetweenness}

Similarity and betweenness of concepts can be valuable sources of information for common-sense reasoning \cite{Derrac2015}: If two concepts are similar, they are expected to have similar properties and behave in similar ways (e.g., pencils and crayons). If one concept (e.g., ``master student'') is conceptually between two other concepts (e.g., ``bachelor student'' and ``PhD student''), it is expected to share all properties and behaviors that the two other concepts have in common (e.g., having to pay an enrollment fee).\\

In Section \ref{CS:Definition}, we have already provided definitions for similarity and betweenness of points. We can naively define similarity and betweenness for concepts by applying the definitions from Section \ref{CS:Definition} to the midpoints of the concepts' central regions $P$ (cf. Definition \ref{def:SSSS}). For computing the similarity, we propose to use both the dimension weights and the sensitivity parameter of the second concept, which again in a sense provides the context for the similarity judgement. If the two concepts are defined on different sets of domains, we use only their common subset of domains for computing the distance of their midpoints and thus their similarity. Betweenness is a binary relation and independent of dimension weights and sensitivity parameters.

These proposed definitions are clearly very naive and shall be replaced by more sophisticated definitions in the future. Especially a graded notion of betweenness would be desirable.

\section{Implementation and Example}
\label{Implementation}

\begin{figure}[t]
\centering
\includegraphics[width=0.95\textwidth]{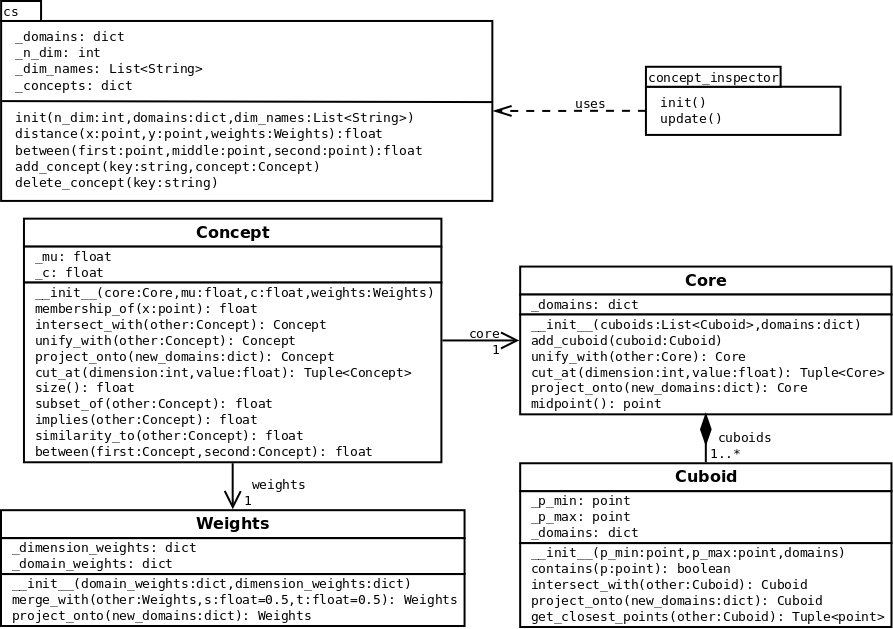}
\caption{Class diagram of our implementation.}
\label{fig:classDiagram}
\end{figure}
We have implemented our formalization in Python 2.7 and have made its source code publicly avaliable on GitHub\footnote{See \url{https://github.com/lbechberger/ConceptualSpaces}.} \cite{Bechberger2018GitHub}. Figure \ref{fig:classDiagram} shows a class diagram illustrating the overall structure of our implementation. As one can see, each of the components from our definition (i.e., weights, cuboids, cores, and concepts) is represented by an individual class. Moreover, the ``cs'' module contains the overall domain structure of the conceptual space (represented as a dictionary mapping from domain identifiers to sets of dimensions) along with some utility functions (e.g., computing distance and betweenness of points). The ``concept\_inspector'' package contains a visualization tool that displays 3D and 2D projections of the concepts stored in the ``cs'' package. When defining a new concept from scratch, one needs to use all of the classes, as all components of the concept need to be specified in detail. When operating with existing concepts, it is however sufficient to use the \texttt{Concept} class which contains all the operations defined in Section \ref{Operations}.\\

Our implementation of the conceptual spaces framework contains a simple toy example -- a three-dimensional conceptual space for fruits, defined as follows:
$$\Delta = \left\{\delta_{color} = \left\{d_{hue}\right\},\delta_{shape} = \left\{d_{round}\right\},\delta_{taste} = \left\{d_{sweet}\right\}\right\}$$
$d_{hue}$ describes the hue of the observation's color, ranging from $0.00$ (purple) to $1.00$ (red). $d_{round}$ measures the percentage to which the bounding circle of an object is filled. $d_{sweet}$ represents the relative amount of sugar contained in the fruit, ranging from 0.00 (no sugar) to 1.00 (high sugar content). As all domains are one-dimensional, the dimension weights $w_{d}$ are always equal to 1.00 for all concepts. We assume that the dimensions are ordered like this: $d_{hue},d_{round},d_{sweet}$. The conceptual space is defined as follows in the code:
\begin{lstlisting}[language=Python]
domains = {'color':[0], 'shape':[1], 'taste':[2]}
dimension_names = ['hue', 'round', 'sweet']
space.init(3, domains, dimension_names)
\end{lstlisting}
\begin{table}[t]
  \centering
  \begin{footnotesize}
  \begin{tabular}{|l||c|c|c|c|c|c|c|c|}
    \hline
    Concept 	& $\Delta_S$& $p^-$ 				& $p^+$ 				& $\mu_0$ 	& $c$ 	& \multicolumn{3}{|c|}{$W$}\\ 
    & & & & & & $w_{\delta_{color}}$ & $w_{\delta_{shape}}$ & $w_{\delta_{taste}}$\\ \hline \hline
    Pear		& $\Delta$	& $\left(0.50, 0.40, 0.35\right)$	& $\left(0.70, 0.60, 0.45\right)$	& 1.0		& 12.0	& 0.50 & 1.25 & 1.25 \\ \hline
    Orange		& $\Delta$	& $\left(0.80, 0.90, 0.60\right)$	& $\left(0.90, 1.00, 0.70\right)$	& 1.0		& 15.0	& 1.00 & 1.00 & 1.00 \\ \hline
    Lemon		& $\Delta$	& $\left(0.70, 0.45, 0.00\right)$	& $\left(0.80, 0.55, 0.10\right)$	& 1.0		& 20.0	& 0.50 & 0.50 & 2.00 \\ \hline
    Granny & \multirow{2}{*}{$\Delta$}	& \multirow{2}{*}{$\left(0.55, 0.70, 0.35\right)$}	& \multirow{2}{*}{$\left(0.60, 0.80, 0.45\right)$}	& \multirow{2}{*}{1.0}		& \multirow{2}{*}{25.0} & \multirow{2}{*}{1.00} & \multirow{2}{*}{1.00} &  \multirow{2}{*}{1.00}\\ 
    Smith & & & & & & & &\\ \hline
    \multirow{3}{*}{Apple}	& \multirow{3}{*}{$\Delta$}	& $\left(0.50, 0.65, 0.35\right)$	& $\left(0.80, 0.80, 0.50\right)$	& \multirow{3}{*}{1.0}		& \multirow{3}{*}{10.0}	& \multirow{3}{*}{0.50} & \multirow{3}{*}{1.50} &  \multirow{3}{*}{1.00} \\ 
    			&			& $\left(0.65, 0.65, 0.40\right)$	& $\left(0.85, 0.80, 0.55\right)$	&			&		&  & & \\ 
    			&			& $\left(0.70, 0.65, 0.45\right)$	& $\left(1.00, 0.80, 0.60\right)$	&			&		&  & & \\ \hline
    \multirow{3}{*}{Banana}	& \multirow{3}{*}{$\Delta$}	& $\left(0.50, 0.10, 0.35\right)$	& $\left(0.75, 0.30, 0.55\right)$	& \multirow{3}{*}{1.0}		& \multirow{3}{*}{10.0}	& \multirow{3}{*}{0.75} & \multirow{3}{*}{1.50} &  \multirow{3}{*}{0.75} \\ 
    			&			& $\left(0.70, 0.10, 0.50\right)$	& $\left(0.80, 0.30, 0.70\right)$	&			&		&  & & \\ 
    			&			& $\left(0.75, 0.10, 0.50\right)$	& $\left(0.85, 0.30, 1.00\right)$	&			&		&  & & \\ \hline
    Red			& $\left\{\delta_{color}\right\}$ & $\left(0.90, -\infty, -\infty\right)$ & $\left(1.00, +\infty, +\infty\right)$ & 1.0 & 20.0 & 1.00 & -- & -- \\ \hline
  \end{tabular}
  \end{footnotesize}
  \caption{Definitions of several concepts.}
  \label{tab:FruitSpace}
\end{table}
\begin{figure}[tp]
\centering
\includegraphics[width=\textwidth]{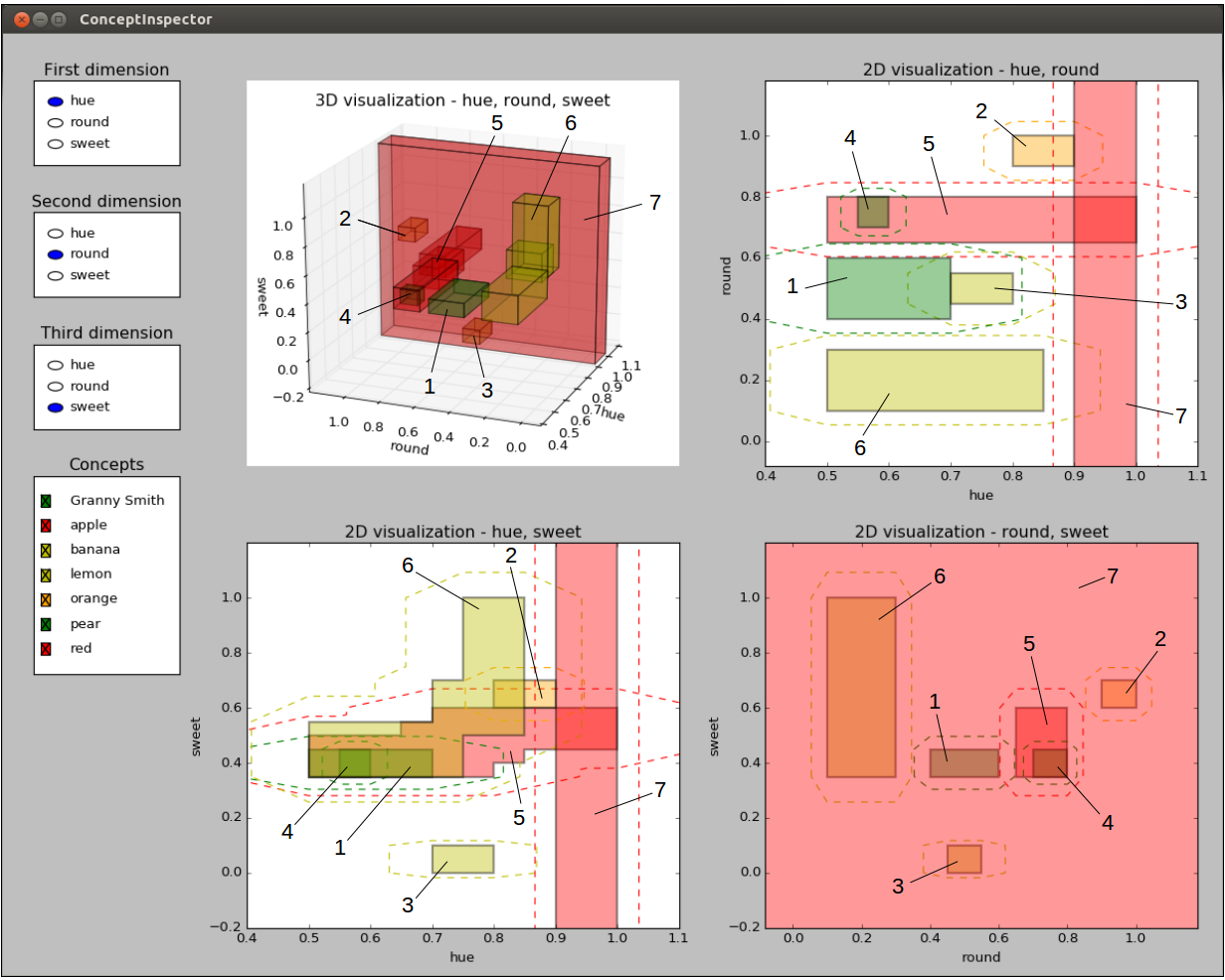}
\caption{Screenshot of the ConceptInspector tool for the fruit space example with subsequently added labels: pear (1), orange (2), lemon (3), Granny Smith (4), apple (5), banana (6), red (7). The 3D visualization only shows the concepts' cores, the 2D visualizations also illustrate the concepts' 0.5-cuts.}
\label{fig:FruitSpace}
\end{figure}
Table \ref{tab:FruitSpace} defines some concepts in this space and Figure \ref{fig:FruitSpace} visualizes them. In the code, concepts can be defined as follows:
\begin{lstlisting}[language=Python]
c_pear = Cuboid([0.5, 0.4, 0.35], [0.7, 0.6, 0.45], domains)
s_pear = Core([c_pear], domains)
w_pear = Weights({'color':0.50, 'shape':1.25, 'taste':1.25}, {'color':{0:1.0}, 'shape':{1:1.0}, 'taste':{2:1.0}})
pear = Concept(s_pear, 1.0, 12.0, w_pear)
\end{lstlisting}

We can load the definition of this fruit space into our python interpreter and apply the different operations described in Section \ref{Operations} to these concepts. This looks for example as follows:
\begin{lstlisting}[language=Python]
>>> execfile('fruit_space.py')
>>> granny_smith.subset_of(apple)
1.0
>>> apple.implies(red)
0.3333333333333332
>>> (pear.similarity_to(apple), pear.similarity_to(lemon))
(0.007635094218859955, 1.8553913626159717e-07)
>>> print apple.intersect_with(pear)
core: {[0.5, 0.625, 0.35]-[0.7, 0.625, 0.45]}
mu: 0.6872892788
c: 10.0
weights: <{'color': 0.5, 'taste': 1.125, 'shape': 1.375},
   {'color': {0: 1.0}, 'taste': {2: 1.0}, 'shape': {1: 1.0}}>
\end{lstlisting}

\section{Related Work}
\label{RelatedWork}

Our work is of course not the first attempt to devise an implementable formalization of the conceptual spaces framework. In this section, we review its strongest competitors.\\

An early and very thorough formalization was done by \cite{Aisbett2001}. Like we, they consider concepts to be regions in the overall conceptual space. However, they stick with G\"{a}rdenfors' assumption of convexity and do not define concepts in a parametric way. The only operations they provide are distance and similarity of points and regions. Their formalization targets the interplay of symbols and geometric representations, but it is too abstract to be implementable.\\

\cite{Rickard2006,Rickard2007} provide a formalization based on fuzziness. Their starting points are properties defined in individual domains. They represent concepts as co-occurence matrices of properties. For instance, $C\left(sour, green\right) = 0.8$ in the ``apple'' concept represents that in 80\% of the cases where the property ``sour'' was present, also the property ``green'' was observed. By using some mathematical transformations, Rickard et al. interpret these matrices as fuzzy sets on the universe of ordered property pairs. Operations defined on these concepts include similarity judgements between concepts and between concepts and instances. The representation of Rickard et al. nicely captures the correlations between different properties, but their representation of correlations is not geometrical: They first discretize the domains by defining properties and then compute co-occurence statistics between these properties. Depending on the discretization, this might lead to a relatively coarse-grained notion of correlation. Moreover, as properties and concepts are represented in different ways, one has to use different learning and reasoning mechanisms for them. The formalization by Rickard et al. is also not easy to work with due to the complex mathematical transformations involved.

We would also like to point out that something very similar to the co-occurence values used by Rickard et al. can be extracted from our representation. One can interpret for instance $C\left({sour},{green}\right)$ as the degree of truth of the implication $\widetilde{S}_{sour} \Rightarrow \widetilde{S}_{green}$ within the apple concept. This number can be computed by using our implication operation: 
$$C\left({sour}, {green}\right) = Impl\left(I\left(\widetilde{S}_{sour}, \widetilde{S}_{apple}\right), I\left(\widetilde{S}_{green}, \widetilde{S}_{apple}\right)\right)$$

\cite{Adams2009} represent concepts by one convex polytope per domain. This allows for efficient computations while supporting a more fine-grained representation than our cuboid-based approach. The Manhattan metric is used to combine different domains. However, correlations between different domains are not taken into account as each convex polytope is only defined on a single domain. Adams and Raubal also define operations on concepts, namely intersection, similarity computation, and concept combination. This makes their formalization quite similar in spirit to ours. 

One could generalize their approach by using polytopes that are defined on the overall space and that are convex under the Euclidean and star-shaped under the Manhattan metric. However, we have found that this requires additional constraints in order to ensure star-shapedness. The number of these constraints grows exponentially with the number of dimensions. Each modification of a concept's description would then involve a large constraint satisfaction problem, rendering this representation unsuitable for learning processes. Our cuboid-based approach is more coarse-grained, but it only involves a single constraint, namely that the intersection of the cuboids is not empty.\\

\cite{Lewis2016} formalize conceptual spaces using random set theory. A random set can be characterized by a set of prototypical points $P$ and a threshold $\epsilon$. Instances that have a distance of at most $\epsilon$ to the prototypical set are considered to be elements of the set. The threshold is however not exactly determined, only its probability distribution $\delta$ is known. Based on this uncertainty, a membership function $\mu\left(x\right)$ can be defined that corresponds to the probability of the distance of a given point $x$ to any prototype $p \in P$ being smaller than $\epsilon$. Lewis and Lawry define properties as random sets within single domains and concepts as random sets in a boolean space whose dimensions indicate the presence or absence of properties. In order to define this boolean space, a single property is taken from each domain. This is in some respect similar to the approach of \cite{Rickard2006, Rickard2007} where concepts are also defined on top of existing properties. However, whereas Rickard et al. use two separate formalisms for properties and concepts, Lewis and Lawry use random sets for both (only the underlying space differs). 

Lewis and Lawry illustrate how their mathematical formalization is capable of reproducing some effects from the psychological concept combination literature. However, they do not develop a way of representing correlations between domains (such as ``red apples are sweet and green apples are sour''). One possible way to do this within their framework would be to define two separate concepts ``red apple'' and ``green apple'' and then define on top of them a disjunctive concept ``apple = red apple or green apple''. This however is a quite indirect way of defining correlations, whereas our approach is intuitively much easier to grasp. Nevertheless, their approach is similar to ours in using a distance-based membership function to a set of prototypical points while using the same representational mechanisms for both properties and concepts.\\

None of the approaches listed above provides a set of operations that is as comprehensive as the one offered by our proposed formalization.
Many practical applications of conceptual spaces (e.g., \cite{Chella2003,Raubal2004,Dietze2008,Derrac2015}) use only partial ad-hoc implementations of the conceptual spaces framework which usually ignore some important aspects of the framework (e.g., the domain structure).\\

The only publicly avaliable implementation of the conceptual spaces framework that we are currently aware of is provided by \cite{Lieto2015, Lieto2017}. They propose a hybrid architecture that represents concepts by using both description logics and conceptual spaces. This way, symbolic ontological information and similarity-based ``common sense'' knowledge can be used in an integrated way. Each concept is represented by a single prototypical point and a number of exemplar points. Correlations between domains can therefore only be encoded through the selection of appropriate exemplars. Their work focuses on classification tasks and does therefore not provide any operations for combining different concepts. With respect to the larger number of supported operations, our formalization and implementation can thus be considered more general than theirs. In contrast to our work, the current implementation of their system\footnote{See \url{http://www.dualpeccs.di.unito.it/download.html}.} comes without any publicly available source code\footnote{The sorce code of an earlier and more limited version of their system can be found here: \url{http://www.di.unito.it/~lieto/cc_classifier.html}.}.
\section{Outlook and Future Work}
\label{Outlook}
As stated earlier, our overall research goal is to devise a symbol grounding mechanism by defining a concept formation process in the conceptual spaces framework.
Concept formation \cite{Gennari1989} is the process of incrementally creating a meaningful hierarchical categorization of unlabeled observations. One can easily see that a successful concept formation process implicitly solves the symbol grounding problem: If we are able to find a bottom-up process that can group observations into meaningful categories, these categories can be linked to abstract symbols. These symbols are then grounded in reality, as the concepts they refer to are generalizations of actual observations.\\

\begin{figure}[tp]
\centering
\includegraphics[width=0.82\textwidth]{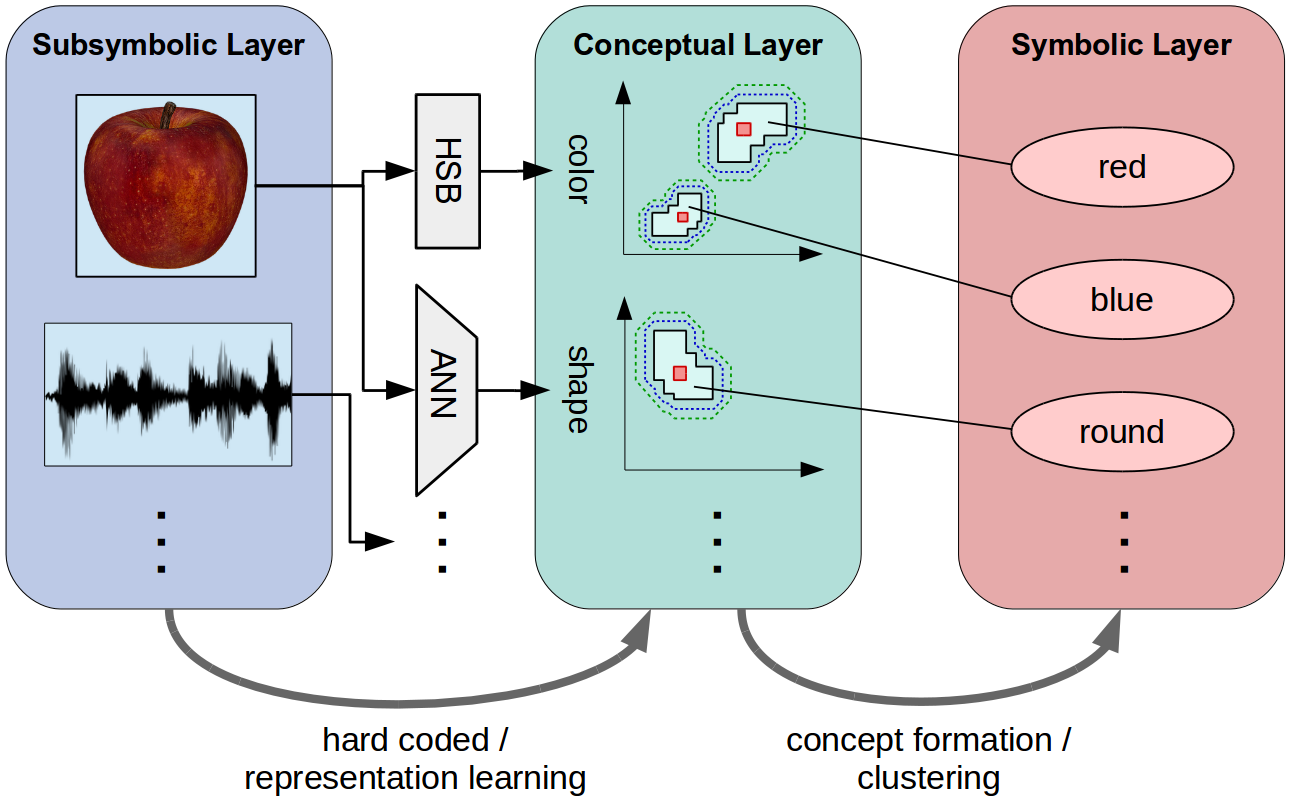}
\caption{Visualization of our envisioned symbol grounding architecture. The translation from the subsymbolic to the conceptual layer will be based on both hard-coded conversions (like the HSB color space) and pre-trained artificial neural networks. The translation from the conceptual layer to the symbolic layer will be based on a clustering algorithm that performs concept formation.}
\label{fig:Architecture}
\end{figure}

We aim for a three-layered architecture as depicted in Figure \ref{fig:Architecture} with the formalization presented in this paper serving as middle layer.
The conceptual space that we will use for our work on concept formation will have a predefined structure: Domains having a well-known structure will be hand-crafted, e.g., color or sound. These domains can quite straightforwardly be represented by a handful of dimensions. Domains with an unclear internal structure (i.e., where it is inherently hard to hand-craft a dimensional representation) can potentially be obtained by deep representation learning \cite{Bengio2013}. This includes for instance the domain of shapes. We will investigate different neural network architectures that have shown promising results in extracting meaningful dimensions from unlabeled data sets, e.g., InfoGAN \cite{Chen2016} and beta-VAE \cite{Higgins2017}.

This set-up of the conceptual space is not seen as an active part of the concept formation process, but as a preprocessing step to lift information from the subsymbolic layer (e.g., raw pixel information from images) to the conceptual layer.\\

The most straightforward way of implementing concept formation within the conceptual spaces framework is to use a clustering algorithm that groups unlabled instances into meaningful regions. Each of these regions can be interpreted as a concept and a symbol can be attached to it.

As this research is seen in the context of artificial general intelligence, it will follow the assumption of insufficient knowledge and resources \cite{Wang2011}: The system will only have \emph{limited resources}, i.e., it will not be able to store all observed data points. This calls for an incremental clustering process. The system will also have to cope with \emph{incomplete information}, i.e., with incomplete feature vectors. 
As concept hierarchies are an important and useful aspect of human conceptualizations \cite[Chapter 7]{Murphy2002}, the concept formation process should ideally result in a hierarchy of concepts.
It is generally unknown in the beginning how many concepts will be discovered. Therefore, the number of concepts must be adapted over time.

There are some existing clustering algorithms that partially fulfill these requirements (e.g., CLASSIT \cite{Gennari1989} or SUSTAIN \cite{Love2004}), but none of them is a perfect fit. We will take inspiration from these algorithms in order to devise a new algorithm that fulfills all the requirements stated above. One can easily see that our formalization is able to support such clustering processes: Concepts can be created and deleted. Modifying the support points of the cuboids in a concept's core results in changes to the concept's position, size, and form. One must however ensure that such modifications preserve the non-emptiness of the cuboids' intersection. Moreover, a concept's form can be changed by modifying the parameters $c$ and $W$: By changing $c$, one can control the overall degree of fuzziness, and by changing $W$, one can control how this fuzziness is distributed among the different domains and dimensions.
Two neighboring concepts $\widetilde{S}_1, \widetilde{S}_2$ can be merged into a single cluster by unifying them. A single concept can be split up into two parts by using the axis-parallel cut operation.\\

Finally, our formalization also supports reasoning processes: 
\cite{Gardenfors2000} argues that adjective-noun combinations like ``green apple'' or ``purple banana'' can be expressed by combining properties with concepts. This is supported by our operations of intersection and subspace projection:

In combinations like ``green apple'', property and concept are compatible. We expect that their cores intersect and that the $\mu_0$ parameter of their intersection is therefore relatively large. In this case, ``green'' should narrow down the color information associated with the ``apple'' concept. This can be achieved by simply computing their intersection.

In combinations like ``purple banana'', property and concept are incompatible. We expect that their cores do not intersect and that the $\mu_0$ parameter of their intersection is relatively small. In this case, ``purple'' should replace the color information associated with the ``banana'' concept. This can be achieved by first removing the color domain from the ``banana'' concept (through a subspace projection) and by then intersecting this intermediate result with ``purple''.

\section{Conclusion}
\label{Conclusion}

In this paper, we proposed a new formalization of the conceptual spaces framework. We aimed to geometrically represent correlations between domains, which led us to consider the more general notion of star-shapedness instead of G\"{a}rden\-fors' favored constraint of convexity. We defined concepts as fuzzy sets based on intersecting cuboids and a similarity-based membership function. Moreover, we provided a comprehensive set of operations, both for creating new concepts based on existing ones and for measuring relations between concepts. This rich set of operations makes our formalization (to the best of our knowledge) the most thorough and comprehensive formalization of conceptual spaces developed so far.

Our implementation of this formalization and its source code are publicly avaliable and can be used by any researcher interested in conceptual spaces. We think that our implementation can be a good foundation for practical research on conceptual spaces and that it will considerably facilitate research in this area.

In future work, we will provide more thorough definitions of similarity and betweenness for concepts, given that our current definitions are rather naive. A potential starting point for this can be the betwenness relations defined by \cite{Derrac2015}. Moreover, we will use the formalization proposed in this paper as a starting point for our research on concept formation as outlined in Section \ref{Outlook}.

\bibliography{/home/lbechberger/Documents/Papers/jabref.bib}{}
\bibliographystyle{apalike}

\end{document}